\newcommand{\mbf}[1]{\mathbf{#1}}
\newcommand{\mcal}[1]{\mathcal{#1}}
\newcommand{\la}{\lambda}
\newcommand{\bs}{\backslash}
\newcommand{\kappaL}{\kappa_{\mcal{L}}}
\newcommand{\tauL}{\tau_{\mcal{L}}}
\newcommand{\kappamin}{\kappa^*_{\min}}
\newcommand{\Sighat}{\mbf{\widehat{\Sigma}}}
\newcommand{\Sigstar}{\mbf{\Sigma}^{*}}
\newcommand{\Thetastar}{\Theta^{*}}
\newcommand{\Shat}{\widehat{\mbf{S}}}
\newcommand{\Lhat}{\widehat{\mbf{L}}}
\newcommand{\Thetahat}{\widehat{\Theta}}
\newcommand{\Sstar}{\mbf{S}^*}
\newcommand{\Lstar}{\mbf{L}^*}
\newcommand{\reff}{r_{\text{eff}}}
\newcommand{\be}{\begin{equation}}
\newcommand{\ee}{\end{equation}}
\newcommand{\ALGN}[1]{\begin{align*}#1\end{align*}}
\newcommand{\ALGNN}[1]{\begin{align}#1\end{align}}
\newtheorem{thm}{Theorem}
\newtheorem{lem}{Lemma}
\newtheorem{cor}{Corollary}
\newtheorem{assumption}{Assumption}
\title{Learning Latent Variable Gaussian Graphical Models}
\author[1]{Zhaoshi Meng}
\author[2]{Brian Eriksson}
\author[1]{Alfred O. Hero III}
\affil[1]{\normalsize{Department of Electrical Engineering and Computer Science, University of Michigan, Ann Arbor, MI 48109, USA}}
\affil[2]{\normalsize{Technicolor Research Center, 735 Emerson Street, Palo Alto, CA 94301, USA}}
\affil[ ]{\normalsize{mengzs@umich.edu, brian.eriksson@technicolor.com, hero@eecs.umich.edu}}
\date{}
\begin{document} 

\maketitle

\begin{abstract} 

Gaussian graphical models (GGM) have been widely used in many high-dimensional applications ranging from biological and financial data to recommender systems. Sparsity in GGM plays a central role both statistically and computationally. Unfortunately, real-world data often does not fit well to sparse graphical models.  In this paper, we focus on a family of latent variable Gaussian graphical models (LVGGM), where the model is conditionally sparse given latent variables, but marginally non-sparse.  In LVGGM, the inverse covariance matrix has a low-rank plus sparse structure, and can be learned in a regularized maximum likelihood framework. We derive novel parameter estimation error bounds for LVGGM under mild conditions in the high-dimensional setting. 
These results complement the existing theory on the structural learning, and open up new possibilities of using LVGGM for statistical inference. 
\end{abstract} 

\section{Introduction}
\label{sec:intro}

Critical to many statistical inference tasks in complex real-world systems, such as prediction and detection, is the ability to extract and estimate distributional characteristics from the observations. Unfortunately, in the high-dimensional regime such model estimation often leads to ill-posed problems, particularly when the number of observations $n$ (or sample size) is comparable to or fewer than the ambient dimensionality $p$ of the model ({\em i.e.,} the ``large $p$, small $n$'' problem). This challenge arises in many modern real-world applications ranging from recommender systems, gene microarray data, and financial data, to name a few. To perform accurate model parameter estimation and subsequent statistical inference, low dimensional structure is often imposed for regularization~\citep{negahban2012unified}.

For Gaussian-distributed data, the central problem is often to estimate the inverse covariance matrix (alternatively known as the precision, concentration or information matrix).  Gaussian graphical models (GGM) 
provide an efficient representation of the precision matrix through a graph that represents non-zeros in the matrix~\citep{lauritzen1996graphical}.  In high-dimensional regimes, this graph can be forced to be sparse, imposing a low-dimensional structure on the GGM.  For sufficiently sparse GGM, statistically consistent estimates of the model structure (\emph{i.e.,} sparsistency) can be achieved ({\em e.g.,}~\citet{ravikumar2011high}).  On the computational side, sparsity also leads to reduced complexity of the estimator~\citep{hsieh2013sparse}. 
However, when the true distribution can not be well-approximated by a sparse GGM, the standard learning paradigm suffers from either large estimation bias due to enforcing a overly sparse model, or degraded computation time for a dense model. Both result in suboptimal performance in the subsequent inference tasks. 

In this paper, we consider a new class of high-dimensional GGM for extending the standard sparse GGM. The proposed model is motivated by many real-world applications, where there exist certain exogenous and often latent factors affecting a large portion of the variables. 
Examples are the price of oil on the airlines' stock price variables~\citep{choi2010gaussian}, and the genres on movie rating variables. 
Conditioning on these \emph{global} effects, the variables are assumed to 
have highly \emph{localized} interactions, which can be well-fitted by a sparse GGM. However, due to the marginalization over global effects, the observed (marginal) GGM, and its corresponding precision matrix, is not sparse. 
Unfortunately, in this regime, existing theoretical results and computational tools for sparse GGM are not applicable. 

To address this problem, we propose to use latent variable Gaussian graphical models (LVGGM) for modeling and statistical inference. LVGGM introduce latent variables to capture the correlations due to the global effects, and the remaining effects are captured by a conditionally sparse graphical model. 
The resulting marginal precision matrix of the LVGGM has a sparse plus low-rank structure, therefore we consider a regularized maximum likelihood (ML) approach for parameter estimation (previously considered by~\citet{chandrasekaran2012latent}).  By utilizing the \emph{almost strong convexity}~\citep{kakade2009learning} of the log-likelihood, we derive a non-asymptotic parameter error bound for the regularized ML estimator. Our derived bounds apply to the high-dimensional setting of $p \gg n$ due to restricted strong convexity~\citep{negahban2012unified} and certain structural incoherence between the sparse and low-rank components of the precision matrix~\citep{yang2013dirty}. 

We show that for sufficiently large $n$, the Frobenius norm error of the precision matrix of LVGGM converges at the rate $\mcal{O}( \sqrt{\frac{(s + \reff \cdot r) \log p}{n}})$, where $s$ is the number of non-zeros in the conditionally sparse precision matrix, $\reff$ is the effective rank of the covariance matrix and $r$ is the number of latent variables.  This rate is in general significantly faster than the standard convergence rate of $\mcal{O}(\sqrt{\frac{p^2 \log{p}}{n}})$ for an unstructured dense GGM. 
This result offers a compelling argument for using LVGGM over sparse GGM for many inference problems.

The paper is structured as follows. In Section~\ref{sec:relWork} we review the relevant prior literature.  In Section~\ref{sec:setup} we formulate the LVGGM estimation problem. In Section~\ref{sec:theory} the main theoretical results are presented.  Experimental results are shown in Section~\ref{sec:exp} and we conclude in Section~\ref{sec:conclude}. 
We use boldface letters to denote vectors and matrices. ${\| \cdot \|_1}$, $\| \cdot \|_2$, $\| \cdot \|_F$, $\| \cdot \|_*$ denote the elementwise $\ell_1$, spectral, Frobenius, and nuclear matrix norms, respectively.

\section{Background and Related Work}
\label{sec:relWork}

The problem of learning GGM with sparse inverse covariance matrices using $\ell_{1}$-regularized maximum likelihood estimation, often referred to as the graphical lasso (Glasso) problem, has been studied in~\citet{friedman2008sparse, ravikumar2011high, rothman2008sparse}. In particular,  
the authors of~\citet{ravikumar2011high} study the  model selection consistency (\emph{i.e.,} ``sparsistency'') under certain incoherence condition.  
Beyond sparse GGM, \citet{choi2010gaussian} propose a multi-resolution extension of a GGM augmented with sparse inter-level correlations, while in~\citet{choi2011learning} the authors consider latent tree-structured graphical models. Both models lead to computationally efficient inference and learning algorithms but restrict the latent structure to trees. Recently,~\citet{liu2013learning} consider a computationally efficient learning algorithm for a class of conditionally tree-structured LVGGM.  

The work that is most relevant to ours is by~\citet{chandrasekaran2012latent}, who study the LVGGM learning problem, but focus on the simultaneous model selection consistency of both the sparse and low-rank components. 
In contrast, in this paper we focus on the Frobenius norm error bounds for estimating the precision matrix of LVGGM. Although structural consistency can be useful for deriving insights, parameter estimation error analysis is of equal or greater importance in practice. Since it provides additional, and usually more direct, insights into factors influencing the performance of the subsequent statistical inference tasks, such as prediction and detection.  
Also, compared with~\citet{chandrasekaran2012latent}, our Frobenius norm error bounds are derived under mild condition on the Fisher information of the distribution. 

We note that there is a fundamentally different line of work on estimating models with a similar structural composition, known as \emph{robust PCA}~\citep{candes2011robust}. In robust PCA, the data matrix is modeled as ``low-rank plus sparse''.  This model has been applied to extracting the salient foreground from background in videos, and detecting malicious user ratings in recommender system data~\citep{xu2012robust}. In contrast, the equivalent covariance model of our LVGGM can be decomposed into a low-rank plus a dense matrix whose \emph{inverse} is sparse. 
A similar covariance model has recently been studied by~\citet{kalaitzis2012residual}, in which an EM algorithm is proposed for estimation but no theoretical error bounds are derived. In this paper, we instead focus on the precision matrix parameterization, which enables model estimation through a convex optimization. This formulation is of both theoretical and computational importance.

\section{Problem Setup}
\label{sec:setup}


\begin{figure*}[t]
\centering
    \begin{subfigure}[ ]{
        \centering
        \includegraphics[width=0.24\textwidth]{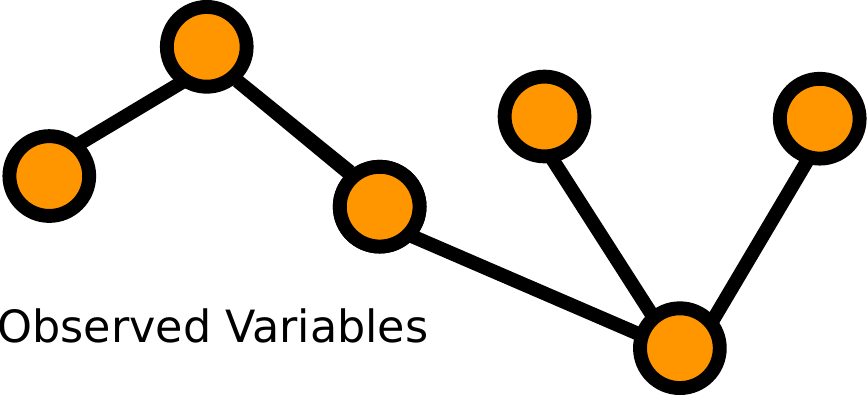}    
        \label{subfig:toy:ggm}
    }
    \end{subfigure}
    \begin{subfigure}[ ]{
        \centering
        \includegraphics[width=0.175\textwidth]{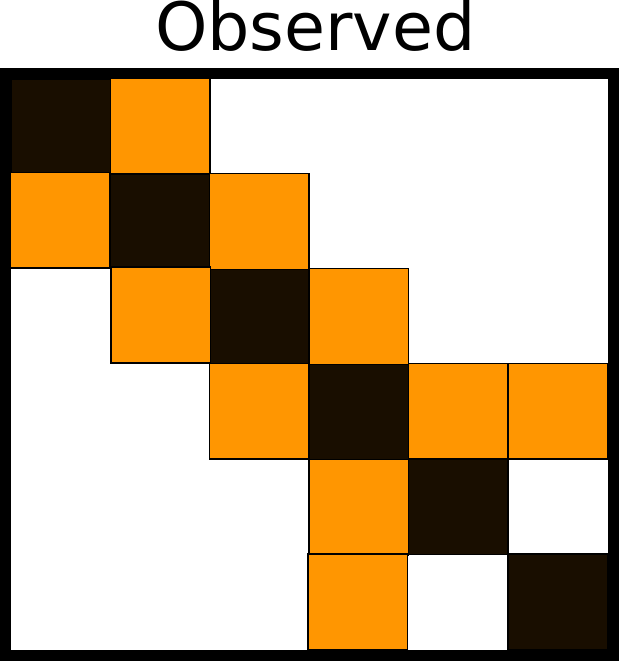}  
        \label{subfig:toy:ggmMatrix}
    }
    \end{subfigure}
    \begin{subfigure}[ ]{
        \centering
        \includegraphics[width=0.24\textwidth]{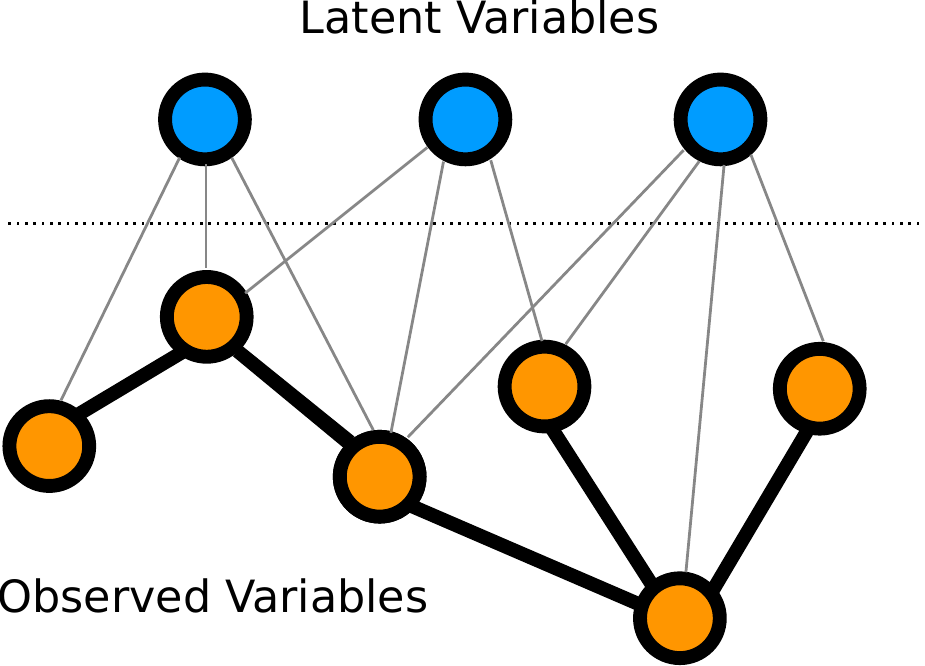}\label{subfig:toy:lvggm}    
    }
    \end{subfigure}
    \begin{subfigure}[ ]{
        \centering
        \includegraphics[width=0.20\textwidth]{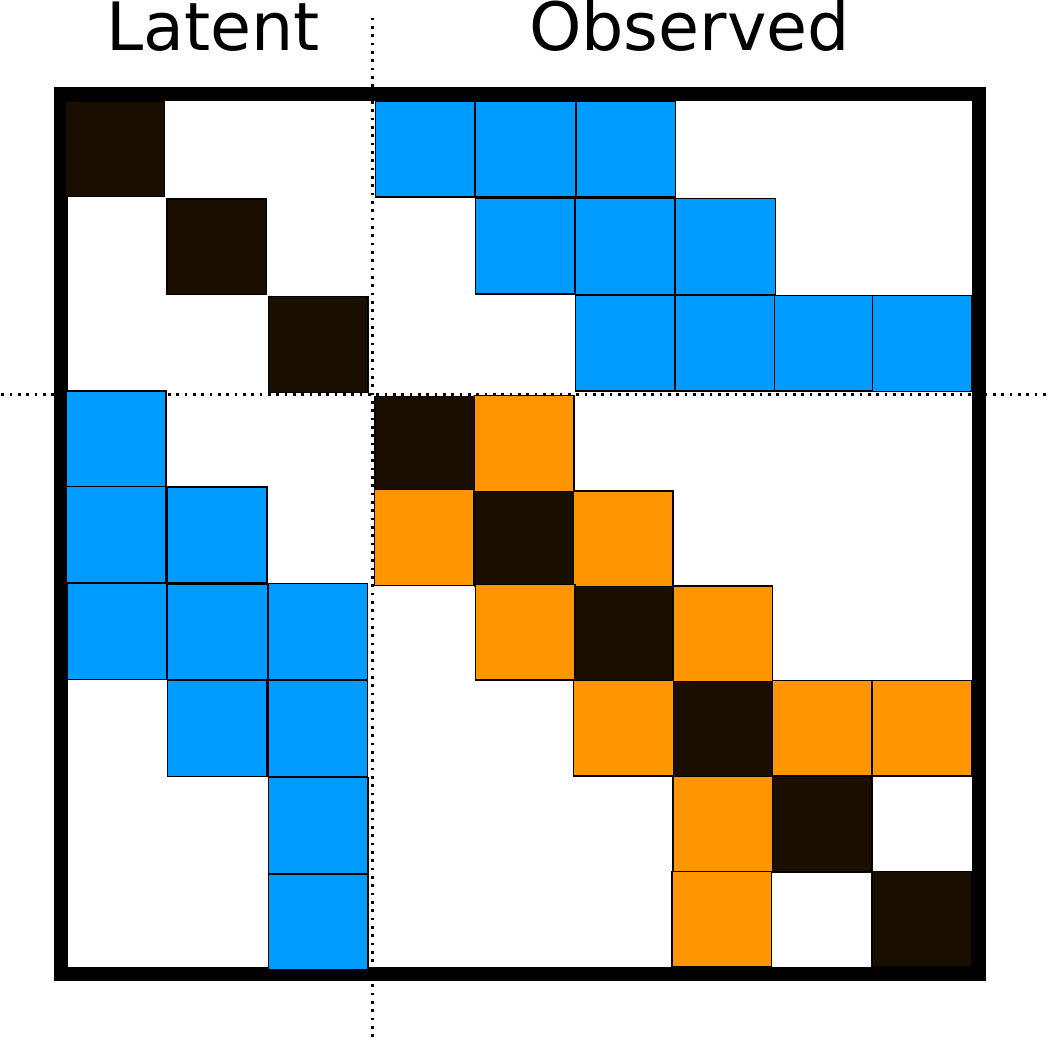}    \label{subfig:toy:lvggmMatrix}
    }
    \end{subfigure}
\caption{Illustrations of a sparse Gaussian graphical model (GGM) (left) and a latent variable Gaussian graphical model (LVGGM) (right).  (A) Example of a sparse GGM with only observed variables, (B) Sparsity pattern of example sparse GGM's precision matrix, (C) Example of a LVGGM with both observed and latent variables, (D) Sparsity pattern of example LVGGM's precision matrix.}
\end{figure*}


In this section, we review Gaussian graphical models and formulate the problem of latent variable Gaussian graphical model estimation via a regularized maximum likelihood optimization. 

\subsection{Gaussian Graphical Models}
Consider a $p$-dimensional random vector $\mbf{x}$ associated with an undirected graph $\mcal{G} = (V_G,E_G)$, where $V_G$ is a set of nodes corresponding to elements of $\mbf{x}$ and $E_G$ is a set of edges connecting nodes (including self-edges for each node). 
Then $\mbf{x}$ follows a graphical model distribution if it satisfies the Markov property with respect to $\mcal{G}$: for any pair of nonadjacent nodes in $\mcal{G}$, the corresponding pair of variables in $\mbf{x}$ are conditionally independent given the remaining variables, \emph{i.e.,} $x_i \perp x_j \ | \ \mbf{x}_{\bs i, j}$, for all $(i, j) \notin E_G$.

If $\mbf{x}$ follows a multivariate Gaussian distribution, the corresponding graphical model is called a Gaussian graphical model (GGM).  We assume without loss of generality that $\mbf{x}$ has zero mean.  
The Markov property in GGM is manifested in the sparsity pattern of the inverse covariance matrix $\mbf{J}$:
\ALGNN{
\mbf{J}_{i,j} = 0 \text{ for all }   i \neq j, (i,j) \notin E. 
}
An example of this property for sparse GGM is shown in Figure~\ref{subfig:toy:ggm} and~\ref{subfig:toy:ggmMatrix}.

The precision matrix parameterization arises in many statistical inference problems for Gaussian distributions, in areas such as belief propagation~\citep{malioutov2006walk}, 
linear prediction,  
portfolio selection in financial data~\citep{ledoit2003improved}, 
and anomaly detection~\citep{chen2011robust}. 
Estimation of the precision matrix in GGM is the first step in these inference problems.

\subsection{Latent Variable Gaussian Graphical Models}
\label{sec:LVGGMsetup}

Unfortunately, due to the presence of global factors that destroy sparsity, real-world observations often do not conform exactly to a sparse GGM~\citep{choi2010gaussian,choi2011learning}.  By introducing latent variables (denoted as a $r$-dimensional random vector $\mbf{x}_L$) to capture global factors, we can generalize the GGM.  
Specifically, we construct a model that is conditionally a GGM, {\em i.e.,} one that has a sparse precision matrix given knowledge of latent variables, ${\bf x}_L$. 

Defining the $p$ observed variables as $\mbf{x}_O$, we assume the joint distribution of the $(p+r)$-dimensional concatenated random vector $\mbf{x} = (\mbf{x}_{O}, \mbf{x}_{L})$ follows a Gaussian distribution with covariance matrix $\mbf{\Omega}$ and precision matrix $\mbf{J} = \mbf{\Omega}^{-1}$.   An example of this structure can be seen in Figure~\ref{subfig:toy:lvggm} and~\ref{subfig:toy:lvggmMatrix}. 
Marginalizing over the latent variables $\mbf{x}_L$, the distribution of the observed variables $\mbf{x}_O$ remains Gaussian with observed covariance matrix, $\mbf{\Sigma} = \mbf{\Omega}_{O, O}$.  The observed precision matrix $\Theta \in \mathbb{R}^{p \times p}$ satisfies:
\ALGNN{
\Theta & = \mbf{\Sigma}^{-1} = \underbrace{\mbf{J}_{O,O}}_{\mbf{S}} \underbrace{- \mbf{J}_{O,L} \mbf{J}_{L,L}^{-1} \mbf{J}_{L,O}}_{\mbf{L}},
\label{eq:theta}
} 
where we have defined $\mbf{S} := \mbf{J}_{O,O}$ and $\mbf{L} := - \mbf{J}_{O,L} \mbf{J}_{L,L}^{-1} \mbf{J}_{L,O}$. Thus, the marginal precision matrix can be written as $\Theta = \mbf{S} + \mbf{L}$, the sum of a sparse and a low-rank matrix.
Similar to standard GGM, we parameterize the marginal distribution through the precision matrix.   We refer to this model as the latent variable GGM, or LVGGM.

The LVGGM is a hierarchical model that generalizes the (sparse) GGM.  Note that $\mbf{S}^{-1} = \mbf{J}_{O,O}^{-1} = \mbf{\Omega}_{O, O} - \mbf{\Omega}_{O, L} \mbf{\Omega}_{L,L}^{-1} \mbf{\Omega}_{L, O}$ is the covariance matrix of the conditional distribution of the observed variables.  The matrix is not generally sparse, even though $\mbf{S}$ is assumed to be sparse.  We will also assume that the number of latent variables is much smaller than the number of observed variables, \emph{i.e.,} $r \ll p$.  We place no sparsity restrictions on the dependencies between the observed and latent variables -- the submatrices $\mbf{J}_{O,L}$ and $\mbf{J}_{L,O}$ could be dense. As a result, the $p \times p$ matrix $\mbf{L} = -\mbf{J}_{O,L} \mbf{J}_{L,L}^{-1} \mbf{J}_{L,O}$ is low-rank and potentially dense. The sparse plus low-rank structure of the marginal precision matrix $\Theta$ is the key property of the precision matrix that will be exploited for model estimation. 

The structural assumptions on the precision matrix of the LVGGM can be further motivated and validated on real-world recommender system data and stock return data.  Due to the space limits, we defer these two motivating examples to Section~\ref{sec:motivatingExp} in the Appendix.

\subsection{Effective Rank of Covariance Matrix}
\label{sec:effectiveRank}

We introduce the \emph{effective rank} of a matrix, which will be useful to derived high-dimensional error bounds.  The effective rank of a matrix $\mbf{\Sigma}$ is defined as~\citep{vershynin2010introduction}:
\ALGNN{
\reff(\mbf{\Sigma}) := {\text{tr}(\mbf{\Sigma})}/{\| \mbf{\Sigma} \|_2}.
}
The effective rank can be considered a measure of the concentration level of the spectrum of $\mbf{\Sigma}$. As we will show in Section~\ref{sec:effrankexp}, in many situations the effective rank of the covariance matrix corresponding to a LVGGM is much smaller than $p$. Under this condition, our theoretical results in the sequel provide a tight Frobenius norm estimation error bound, which is significantly improved upon the error bound derived without the effective rank assumption.

\subsection{Regularized ML Estimation of LVGGM}
\label{sec:regML}

Available are $n$ samples $x_1, x_2, \ldots, x_n$ from a LVGGM model $\mbf{x}_O$, concatenated into a data matrix $\mbf{X} \in \mathbb{R}^{p \times n}$.  The negative log-likelihood function is
\ALGNN{
\mcal{L}(\Theta; \mbf{X}) = \langle \Sighat, \Theta \rangle - \log \det (\Theta),
\label{eq:llh}
}
where $\Sighat := \frac{1}{n} \mbf{X}^T \mbf{X}$ is the sample covariance matrix. The regularized ML estimate minimizes the objective function $\mcal{L}(\Theta; \mbf{X}) + \la \mcal{R}(\Theta)$, where the regularization parameter $\lambda > 0$, and the regularization function $\mcal{R} ( \Theta )$ is designed to enforce the sparse plus low-rank structure on $\Theta$.

Similar to~\citet{chandrasekaran2012latent}, we consider the following regularized ML estimation problem:
\ALGNN{
\begin{split}
\min_{\mbf{S}, \mbf{L}} \ & \ \mathcal{L}(\mbf{S} + \mbf{L}; \mbf{X}) + \lambda \| \mbf{S} \|_1 + \mu \| \mbf{L} \|_* \\
{\text{s.t.}} \ & \ - \mbf{L} \succeq \mbf{0}, \ \mbf{S} + \mbf{L} \succeq \mbf{0},
\end{split} \label{eq:MestLVGGM}
}
where the corresponding regularization function is the sum of two regularizers:
$\mathcal{R}(\Theta) = \| \mbf{S} \|_1 + \frac{\mu}{\lambda} \| \mbf{L} \|_*$,
each of which has been shown to promote sparse (low-rank) structure in $\mbf{S}$ ($\mbf{L}$, respectively)~\citep{negahban2012unified}. Constants $\la, \mu >0$ are regularization parameters corresponding to the two functions, respectively.  The LVGGM estimator is defined as a solution to the above convex optimization problem~\eqref{eq:MestLVGGM}. Efficient convex solver, such as~\citet{ma2012alternating}, can be used to solve.

\section{Error Bounds on ML LVGGM Estimation}
\label{sec:theory}

We analyze the regularized ML estimation problem~\eqref{eq:MestLVGGM} and provide Frobenius norm error bounds for estimating the precision matrix in high-dimensional setting.  We adopt the decomposable regularization framework of~\cite{negahban2012unified, agarwal2012noisy, yang2013dirty} to derive these bounds. In contrast to this prior work, here we focus on multiple decomposable regularizers interacting with the non-quadratic log-likelihood loss function encountered in the LVGGM. 
Two important ingredients in the derivations are the restricted strong convexity of the loss function, and an incoherence condition between the two structured subspaces containing the sparse and low-rank components ($\mbf{S}$ and $\mbf{L}$).  
We show that under assumptions on the Fisher information these two conditions are verified. 

In the following subsections, first we define some necessary notation, then we introduce the assumptions and place them in the context of prior literature, and finally we state the main results in Theorem~\ref{thm:main} and Theorem~\ref{cor:errorbound:loweffrank}.

\subsection{Decomposable Regularizers and Subspace Notation}

In this subsection we introduce the notion of decomposable regularizers and the corresponding subspace pairs. We refer the reader to~\citet{negahban2012unified} for more details. 

Consider a pair of subspaces $(\mcal{M}, \overline{\mcal{M}}^{\perp})$, where $\mcal{M} \subset \overline{\mcal{M}} \subset \mathbb{R}^{p \times p}$. $\mcal{R}(\cdot)$ is called a decomposable regularization function with respect to the subspace pair if, for any $u \in \mcal{M}, v \in \overline{\mcal{M}}^{\perp}$, we have $\mcal{R}(u + v) = \mcal{R}(u) + \mcal{R}(v)$.
 
For the sparse and low-rank matrix-valued parameters, the following two subspace pairs and their corresponding decomposable regularizers are considered:

\begin{itemize}[leftmargin=10pt, itemsep=0pt, topsep=0pt]
\item \emph{Sparse matrices.} Let ${E} \subseteq \{1, \ldots, p\} \times \{1, \ldots, p\}$ be a subset of index pairs (edges). Define $\mcal{M}({E}) = \overline{\mcal{M}}({E})$ as the subspace of all sparse matrices in $\mathbb{R}^{p \times p}$ that are supported in subsets of ${E}$, \emph{i.e.,} $\mcal{P}_{\mcal{M}({E})}(\mbf{A}) = \mbf{A}_{{E}}$.  A decomposable regularizer is the $\ell_1$ norm, since $\| \mbf{A} \|_1 = \| \mbf{A}_{{E}} \|_1 + \| \mbf{A}_{{E}^C} \|_1$. 

\item \emph{Low-rank PSD matrices.} 
Consider a class of low-rank and positive semi-definite matrices $\mcal{A} \subset \mathbb{S}^{p \times p}_+$ which have rank $r \le p$. For any given matrix $\mbf{A} \in \mcal{A}$, let $\text{col}(\mbf{A})$ denote its column space.  
Let $U \subset \mathbb{R}^n$ be a $r$-dimensional subspace and define the subspace $\mcal{M}(U)$ and the perturbation subspace $\overline{\mcal{M}}^{\perp}(U)$ as
\ALGN{
\mcal{M}(U)  := & \{ \mbf{A} \in \mathbb{R}^{n \times p} \ | \ \text{col}(\mbf{A}) \subseteq U\}, \\
\overline{\mcal{M}}^{\perp}(U) := & \{ \mbf{A} \in \mathbb{R}^{n \times p} \ | \ \text{col}(\mbf{A}) \subseteq U^{\perp}\}.
}
Then the nuclear norm $\mcal{R}_L(\cdot) = \| \cdot \|_*$ is a decomposable regularization function with respect to the subspace pair $(\mcal{M}(U), \overline{\mcal{M}}^{\perp}(U))$. 
\end{itemize}

For the true model parameter $\Theta^*$, we define its associated \emph{structural error set} with respect to a subspace $\mcal{M}$ as~\citep{negahban2012unified}: 
\ALGNN{
\mathbb{C}(\mcal{M}, \overline{\mcal{M}}^{\perp}; \Theta^*) := \left\{ \Delta \in \mathbb{R}^{n \times p} \ | \ \mcal{R}(\Delta_{\overline{\mcal{M}}^{\perp}}) \le 3 \mcal{R}(\Delta_{\overline{\mcal{M}}}) + 4 \mcal{R}(\Theta^*_{\overline{\mcal{M}}^{\perp}}) \right\} \notag.
}
By construction, if the norm of the projection of the true parameter $\Theta^*$ into $\overline{\mcal{M}}^{\perp}$ is small, then elements $\Delta$ in this structural error set also have limited projection onto the perturbation subspace $\overline{\mcal{M}}^{\perp}$. 

Now let $\Thetastar$ be the true (marginal) precision matrix of the LVGGM, and let the sparse and low-rank components be $\Sstar$ and $\Lstar$, respectively. 
For the defined subspace pairs $(\mcal{M}(E), \overline{\mcal{M}}(E)^{\perp})$ and $(\mcal{M}(U), \overline{\mcal{M}}(U)^{\perp})$, we use $\mathbb{C}(E)$ and $\mathbb{C}(U)$ as the shorthand notations for the corresponding structural error sets centered at $\Sstar$ and $\Lstar$, \emph{i.e.,} $\mathbb{C}(\mcal{M}(E), \overline{\mcal{M}}(E)^{\perp}; \Sstar)$ and $\mathbb{C}(\mcal{M}(U), \overline{\mcal{M}}(U)^{\perp}; \Lstar)$, respectively. Later, we will consider the perturbation of $\Thetastar$ along restricted directions in these two sets.

\subsection{Assumptions on Fisher Information}
\label{sec:assumpFisher}

We characterize the interaction between the elements in the two subspaces through their inner products using the Hessian of the loss function, also known as the \emph{Fisher information} of the distribution.  Denoting the Fisher information matrix of a Gaussian distribution as $\mcal{F}^*$ (evaluated at $\Thetastar$), we find that $\mcal{F}^* = {\Thetastar}^{-1} \otimes {\Thetastar}^{-1}$, where $\otimes$ is the Kronecker product. 
We define the \emph{Fisher inner product} between two matrices $\Delta_{A}$ and $\Delta_B$ as
\ALGNN{
\langle \Delta_{A}, \Delta_B \rangle_{\mcal{F}^*} & := \text{vec}(\Delta_A)^T \mcal{F}^* \text{vec}(\Delta_B) \\
& = \text{Tr}({\Thetastar}^{-1} \Delta_A {\Thetastar}^{-1} \Delta_B),
}
where $\text{vec}(\cdot)$ denotes the vectorization of a matrix.

Similar to prior work of~\cite{kakade2009learning}, we define the induced \emph{Fisher norm} of a matrix $\Delta$ as
\ALGNN{
\| \Delta \|_{\mcal{F}^*}^2 & := \text{vec}(\Delta)^T \mcal{F}^* \text{vec}(\Delta) \label{eq:fishernorm} \\
& = \text{Tr}({\Thetastar}^{-1} \Delta {\Thetastar}^{-1} \Delta).
}

The first assumption we make is the following {\em Restricted Fisher Eigenvalue} (RFE) condition on the true precision model with respect to the sparse and low-rank structural error sets. 
\begin{assumption}[\textbf{Restricted Fisher Eigenvalue}]
\label{assump:rfe}
There exists some constant $\kappa_{\min}^* > 0$, such that for all $\Delta\in \mathbb{C}(E) \cup \mathbb{C}(U)$, the following holds:
\ALGNN{
\| \Delta \|_{\mcal{F}^*}^2 \ge \kappa_{\min}^* \| \Delta \|_F^2 \ .
\label{eq:RFE}
}
\end{assumption}

This RFE condition generalizes the restricted eigenvalue (RE) condition for sparsity-promoting linear regression problems~\cite{bickel2009simultaneous}.  It assumes that the minimum eigenvalue of the Fisher information is bounded away from zero along the directions $\mathbb{C}(E)$ and $\mathbb{C}(U)$. Due to the identity~\eqref{eq:fishernorm} and properties of the Kronecker product, a trivial lower bound for $\kappamin$ is $\la_{\min}^2(\Thetastar)$, where $\la_{\min}(\cdot)$ denotes the minimum eigenvalue. In the high-dimensional setting, the RFE parameter $\kappamin$, which is defined only with respect to the above restricted set of directions, can be substantially larger than $\la_{\min}^2(\Thetastar)$.  As a result, the derived error bounds, which depend on $\kappamin$, are generally tighter than the bounds depending on $\la_{\min}^2(\Thetastar)$ (cf.~Theorem~\ref{thm:main}).

Due to the sparse plus low-rank superpositioned structure, we impose a type of incoherence between the two structural error sets to ensure consistent estimation of the combined model. The incoherence condition will limit the interaction between elements from the two sets. 
For our problem, such interaction occurs through their inner products with the Fisher information, which motivates the following \emph{Structural Fisher Incoherence} (SFI) assumption (which generalizes the \emph{C-Linear} assumption proposed in~\citet{yang2013dirty}).

Let $\mcal{P}_{E} := \mcal{P}_{\overline{\mcal{M}}(E)}$ denote the projection operator corresponding to the subspace $\overline{\mcal{M}}(E)$. Similarly define $\mcal{P}_U := \mcal{P}_{\overline{\mcal{M}}(U)}$, $\mcal{P}_{E^{\perp}} := \mcal{P}_{\overline{\mcal{M}}(E)^{\perp}}$, and $\mcal{P}_{U^{\perp}} := \mcal{P}_{\overline{\mcal{M}}(U)^{\perp}}$. We assume the following condition on the Fisher information. 

\begin{assumption}[\textbf{Structural Fisher Incoherence}]
\label{assump:sfi}
Given a constant $M>6$, a set of regularization parameters $(\la, \mu)$, and the subspace pairs $(\mcal{M}(E), \overline{\mcal{M}}(E)^{\perp})$ and $(\mcal{M}(U), \overline{\mcal{M}}(U)^{\perp})$ as defined above, let $\Lambda = 2 + 3 \max \left\{ \frac{\la \sqrt{s}}{\mu \sqrt{r}},  \frac{\mu \sqrt{r}}{\la \sqrt{s}} \right\}$, where $s = |E|$ and $r = \text{rank}(U)$. 
Then the Fisher information $\mcal{F}^*$ satisfies:
\ALGN{
& \max \left\{ \overline{\sigma} \left( \mcal{P}_{E} \mcal{F}^* \mcal{P}_{U} \right), \overline{\sigma} \left( \mcal{P}_{E^{\perp}} \mcal{F}^* \mcal{P}_{U} \right), \overline{\sigma} \left( \mcal{P}_{E} \mcal{F}^* \mcal{P}_{U^{\perp}} \right), \overline{\sigma} \left( \mcal{P}_{E^{\perp}} \mcal{F}^* \mcal{P}_{U^{\perp}} \right) \right\} \le \frac{\kappa_{\min}^*}{c_1 \Lambda^2},
} 
where $\overline{\sigma}(\cdot)$ denotes the maximum singular value, and constant $c_1$ is defined as $c_1 = \frac{16M}{M-6}$.
\end{assumption}

The constant $M$ is related to a ``burn-in'' period after which the likelihood loss function has desirable properties in a small neighborhood of the true parameter.  
In particular, when $M = 7$, the constant $c_1 = 112$ suffices for our theory to hold. See the main theorem and its proof for more discussion on this quantity.

It is interesting to compare our SFI assumption to other similar assumptions in the literature of GGM estimation. In \citet{ravikumar2011high}, a form of irrepresentability condition is assumed, which limits the induced $\ell_1$ norm of a matrix that is similar to the projected Fisher information onto the sparse matrix subspace pair. In \citet{chandrasekaran2012latent}, the notion of irrepresentability is extended to two subspace pairs (\emph{i.e.,} sparse and low-rank), but detailed behaviors of the projected Fisher information are controlled (see the main assumption on page 1949 of~\citet{chandrasekaran2012latent}). For model selection consistency, a more general form of irrepresentability has been shown to be necessary for model selection consistency, see~\citet{lee2013model} for a recent discussion. In contrast to the above line of work, the SFI assumption we make only controls the maximum singular values of the projected Fisher information. This can be explained as we are interested in bounding a weaker quantity, the Frobenius norm of the parameter estimation error, instead of establishing the stronger model selection consistency of~\citet{ravikumar2011high} or the algebraic consistency as in~\citet{chandrasekaran2012latent}.

\subsection{Error Bounds for LVGGM Estimation}

We have the following bound on the parameter error of the estimated precision matrix of LVVGGM, $\Thetahat = \Shat + \Lhat$, obtained by solving the regularized ML problem~\eqref{eq:MestLVGGM}.

\begin{thm}
\label{thm:main}
Suppose Assumption~\ref{assump:rfe} and~\ref{assump:sfi} hold for the true marginal precision matrix $\Thetastar$, and the regularization parameters are chosen such that
\ALGNN{
\la \ge 2 \| \Sigstar - \Sighat \|_{\infty} \ \text{ and } \ \mu \ge 2 \| \Sigstar - \Sighat \|_{2}.    \label{eq:regparams}
}
Given a constant $M > 6$, if an optimal solution pair $(\Shat, \Lhat)$ to the convex program~\eqref{eq:MestLVGGM} satisfies 
\ALGNN{
\max \{ \| \Shat - \Sstar \|_{\mcal{F}^*}, \| \Lhat - \Lstar \|_{\mcal{F}^*} \} \le \frac{1}{6 M^2},    \label{eq:burnin}
}
then we have the following error bound for the estimated precision matrix $\Thetahat = \Shat + \Lhat$:
\ALGNN{
\label{eq:errorbound}
\| \Thetahat - \Thetastar \|_{F} \le \frac{6}{\kappaL} \max \left\{ \la \sqrt{s}, \mu \sqrt{r} \right\} + \sqrt{\frac{8 r_{\perp}^*}{\kappaL}},
}
where $s = |E|$, $r = \textrm{rank}(U)$, and 
\ALGNN{
\kappaL := & \ \frac{M-2}{2(M-1)} \kappa_{\min}^*, \\
r_{\perp}^* := & \ \la \sum_{(j,k) \notin E} | \Sstar_{jk} | + \mu \sum_{j = r+1}^p \sigma_j ( \Lstar).
}
\end{thm}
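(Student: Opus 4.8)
\emph{Proof proposal.} The plan is to instantiate the unified decomposable-regularizer analysis of Negahban et al., in its multi-regularizer (``dirty model'') form, for the non-quadratic log-det loss, replacing restricted strong convexity of a quadratic loss by the \emph{almost strong convexity} of the Gaussian log-likelihood on the burn-in ball \eqref{eq:burnin}. Write $\Delta_S = \Shat - \Sstar$, $\Delta_L = \Lhat - \Lstar$, $\Delta = \Thetahat - \Thetastar = \Delta_S + \Delta_L$, and $a = \|\Delta_S\|_F$, $b = \|\Delta_L\|_F$. First I would record the basic inequality: feasibility of $(\Sstar,\Lstar)$ and optimality of $(\Shat,\Lhat)$ in \eqref{eq:MestLVGGM} give $\mcal{L}(\Thetahat) - \mcal{L}(\Thetastar) \le \la(\|\Sstar\|_1 - \|\Shat\|_1) + \mu(\|\Lstar\|_* - \|\Lhat\|_*)$. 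Since $\nabla\mcal{L}(\Thetastar) = \Sighat - \Sigstar$ and the Hessian of $-\log\det$ at $\Theta$ is $\Theta^{-1}\otimes\Theta^{-1}$, a second-order Taylor expansion with integral remainder, together with \eqref{eq:burnin} (which forces $\|{\Thetastar}^{-1/2}\Delta{\Thetastar}^{-1/2}\|_2 \le \|\Delta\|_{\mcal{F}^*} \le \tfrac{1}{3M^2}$, the identity $\|\Delta\|_{\mcal{F}^*}^2 = \|{\Thetastar}^{-1/2}\Delta{\Thetastar}^{-1/2}\|_F^2$ being the bridge), yields the almost-strong-convexity bound $\mcal{L}(\Thetahat) - \mcal{L}(\Thetastar) - \langle \Sighat - \Sigstar, \Delta\rangle \ge \kappaL \|\Delta\|_{\mcal{F}^*}^2 / \kappamin$, i.e. with the factor $\tfrac{M-2}{2(M-1)}$ arising from controlling the eigenvalues of $I + t\,{\Thetastar}^{-1/2}\Delta{\Thetastar}^{-1/2}$ uniformly over $t\in[0,1]$.

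Next I would bound the gradient term by duality of the regularizers and the choice \eqref{eq:regparams}: $|\langle \Sighat-\Sigstar,\Delta_S\rangle| \le \|\Sighat-\Sigstar\|_\infty \|\Delta_S\|_1 \le \tfrac{\la}{2}\|\Delta_S\|_1$ and $|\langle \Sighat-\Sigstar,\Delta_L\rangle| \le \|\Sighat-\Sigstar\|_2 \|\Delta_L\|_* \le \tfrac{\mu}{2}\|\Delta_L\|_*$. Combining with the two facts above gives $\tfrac{M-2}{2(M-1)}\|\Delta\|_{\mcal{F}^*}^2 \le \la(\|\Sstar\|_1 - \|\Shat\|_1) + \tfrac{\la}{2}\|\Delta_S\|_1 + \mu(\|\Lstar\|_* - \|\Lhat\|_*) + \tfrac{\mu}{2}\|\Delta_L\|_*$. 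I would then invoke decomposability of $\|\cdot\|_1$ w.r.t.\ $(\mcal{M}(E),\overline{\mcal{M}}^\perp(E))$ and of $\|\cdot\|_*$ w.r.t.\ $(\mcal{M}(U),\overline{\mcal{M}}^\perp(U))$: $\|\Sstar\|_1 - \|\Shat\|_1 \le \|(\Delta_S)_E\|_1 - \|(\Delta_S)_{E^C}\|_1 + 2\|\Sstar_{E^C}\|_1$ and the analogue for the nuclear norm. Keeping the manifestly negative terms shows $(\Delta_S,\Delta_L)$ lies in the combined restricted set; using the $\Lambda$-balancing between $\la\sqrt{s}$ and $\mu\sqrt{r}$ this places $\Delta_S\in\mathbb{C}(E)$ and $\Delta_L\in\mathbb{C}(U)$, so Assumption~\ref{assump:rfe} applies to each. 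Dropping those negative terms leaves $\tfrac{M-2}{2(M-1)}\|\Delta\|_{\mcal{F}^*}^2 \le \tfrac{3\la}{2}\|(\Delta_S)_E\|_1 + \tfrac{3\mu}{2}\|(\Delta_L)_{\overline{\mcal{M}}(U)}\|_* + 2r_{\perp}^*$. To lower-bound the left side I would expand $\|\Delta\|_{\mcal{F}^*}^2 = \|\Delta_S\|_{\mcal{F}^*}^2 + 2\langle\Delta_S,\Delta_L\rangle_{\mcal{F}^*} + \|\Delta_L\|_{\mcal{F}^*}^2$, bound the diagonal terms by RFE ($\ge\kappamin(a^2+b^2)$), and bound the cross term by Assumption~\ref{assump:sfi}: decomposing each factor along $\mcal{P}_E,\mcal{P}_{E^\perp}$ and $\mcal{P}_U,\mcal{P}_{U^\perp}$ and using $\overline{\sigma}(\mcal{P}_\cdot \mcal{F}^* \mcal{P}_\cdot) \le \kappamin/(c_1\Lambda^2)$ gives $|\langle\Delta_S,\Delta_L\rangle_{\mcal{F}^*}| \le \tfrac{4\kappamin}{c_1\Lambda^2}ab$, hence $\|\Delta\|_{\mcal{F}^*}^2 \ge \kappamin(1 - \tfrac{4}{c_1\Lambda^2})(a^2+b^2)$, so the left side dominates $\kappaL(a^2+b^2)$ once the $(1-\tfrac{4}{c_1\Lambda^2})$ factor is absorbed (the choice $c_1 = \tfrac{16M}{M-6}$ is calibrated for exactly this).

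Finally I would close the argument: $\|(\Delta_S)_E\|_1 \le \sqrt{s}\,a$ and $\|(\Delta_L)_{\overline{\mcal{M}}(U)}\|_* \le \sqrt{2r}\,b$ turn the inequality into $\kappaL(a^2+b^2) \lesssim \la\sqrt{s}\,a + \mu\sqrt{r}\,b + r_{\perp}^*$; Young's inequality ($\la\sqrt{s}\,a \le \tfrac{\kappaL}{2}a^2 + \tfrac{\la^2 s}{2\kappaL}$, and likewise for $b$) absorbs the $a^2,b^2$ terms on the right into the left, yielding $a^2+b^2 \lesssim \max\{\la^2 s,\mu^2 r\}/\kappaL^2 + r_{\perp}^*/\kappaL$; then $\|\Delta\|_F^2 \le 2(a^2+b^2)$ and $\sqrt{x+y}\le\sqrt{x}+\sqrt{y}$ give \eqref{eq:errorbound}, the explicit constants $6$ and $8$ leaving slack for the crude steps. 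I expect the main obstacle to be the interplay between the non-quadratic loss and the two-subspace structure — precisely, converting the burn-in condition \eqref{eq:burnin} into the almost-strong-convexity constant $\tfrac{M-2}{2(M-1)}$ in the \emph{Fisher} norm (not the Frobenius norm), and then matching it against the SFI threshold so that the single combined cone produced by the dirty-model basic inequality can be split into RFE-admissible cones for $\Delta_S$ and $\Delta_L$ with the $\Lambda$-inflation exactly accounted for; the rest is bookkeeping.
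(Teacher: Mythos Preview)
Your proposal is correct and assembles the same three ingredients as the paper --- almost strong convexity of the $\log\det$ loss on the burn-in ball (Kakade et al.), the decomposable-regularizer cone argument (Negahban et al.), and the structural-incoherence control of the sparse/low-rank interaction (Yang et al.) --- so in substance it is the paper's approach. The one organizational difference worth noting: the paper factors the curvature step into two separate lemmas (Lemma~\ref{lem:rsc}, establishing RSC for each $\Delta\in\mathbb{C}(E)\cup\mathbb{C}(U)$ individually; and Lemma~\ref{lem:si}, establishing the SI bound on the full incoherence measure $c_{\mcal{L}}(\Delta_S,\Delta_L;\Thetastar)=|\delta\mcal{L}(\Delta_S+\Delta_L)-\delta\mcal{L}(\Delta_S)-\delta\mcal{L}(\Delta_L)|$, which carries higher-order Taylor cross terms) and then invokes Theorem~1 of Yang et al.\ as a black box. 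You instead apply almost strong convexity \emph{once} to the combined perturbation $\Delta=\Delta_S+\Delta_L$, obtain $\delta\mcal{L}(\Delta)\gtrsim\|\Delta\|_{\mcal{F}^*}^2$, expand the Fisher norm, and control only the single quadratic cross term $\langle\Delta_S,\Delta_L\rangle_{\mcal{F}^*}$ directly via the SFI bound on the projected Fisher operator. Your inline route is marginally more streamlined because it never has to bound the higher-order cross terms that appear in $c_{\mcal{L}}$; the paper's modular route has the compensating advantage of plugging verbatim into the existing dirty-model theorem without re-deriving it. Either packaging yields the same constants and the bound~\eqref{eq:errorbound}.
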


\begin{proof}[Proof sketch]
The proof is inspired by~\citet{yang2013dirty}, in which a parameter estimation error bound is proven for estimating a class of superposition-structured parameters, such as sparse plus low-rank, through M-estimation with decomposable regularizers. 
Critical to specializing this framework to our LVGGM estimation problem is to verify two conditions on the log-likelihood loss function~\eqref{eq:llh}: the restricted strong convexity (RSC) and structural incoherence (SI). The RSC condition (which originally proposed in~\citet{negahban2012unified}) specifies the loss function to be sufficiently curved ({\em i.e.}~lower bounded by a quadratic function) along a restricted set of directions (defined by $\mathbb{C}(E)$ and $\mathbb{C}(U)$). On the other hand, the SI condition effectively limits certain interaction between elements from the above two structural error sets. In~\citet{yang2013dirty}, under certain \emph{C-linear} assumptions, the RSC and SI conditions are verified for several problems with quadratic loss functions. For the LVGGM estimation problem, however, the technical difficulty lies in the non-quadratic log-likelihood loss~\eqref{eq:llh}, for which the previously established RSC and SI conditions do not hold. 

To deal with this difficulty, we leverage the \emph{almost strong convexity} properties~\citep{kakade2009learning} to characterize the convergence behavior of the sum of higher-order terms in the Taylor series of the log-likelihood loss function. We show that in the regime specified by condition~\eqref{eq:burnin}, the loss function can be well-approximated by the sum of a quadratic function and a residual term. 
Under this condition, the RFE assumption (Assumption~\ref{assump:rfe}) guarantees the RSC condition (cf.~Lemma~\ref{lem:rsc}), and the SFI assumption (Assumption~\ref{assump:sfi}) leads to SI condition to hold (cf.~Lemma~\ref{lem:fisher:inner}). Theorem~\ref{thm:main} can then be proven by the general theorem in~\citet{yang2013dirty}. A detailed proof of Theorem~\ref{thm:main} can be found in Appendix~\ref{sec:proof:main}. 
\end{proof}

We make the following remarks:
\begin{itemize}
\item The error bound~\eqref{eq:errorbound} is a family of upper bounds defined by different sets of subspace pairs $(\mcal{M}(E), \overline{\mcal{M}}(E)^{\perp})$ and $(\mcal{M}(U), \overline{\mcal{M}}(U)^{\perp})$.  The tightest bound can be achieved by appropriately choosing $E$ and $U$. The first additive term in~\eqref{eq:errorbound} captures effect of the estimation error, while the second term captures the approximation error. In many cases it is reasonable to assume the approximation error is zero, then the error bound reduces to the first additive term. 

\item We note that similar derivations also apply to $\ell_1$-regularized estimation of sparse GGM. For the sparse GGM, only Assumption~\ref{assump:rfe} is required, and the derivations largely simplify. The final error bound also contains estimation and approximation errors, depending only on the sparse matrix subspace pair. However, when the true precision matrix $\Thetastar$ cannot be well-approximated as a sparse matrix (such as the LVGGM case), the approximation error would be much worse, leading to an inefficient learning rate.   

\item We finally remark that the SFI assumption can be relaxed to an even milder incoherence condition, $\| \mbf{L} \|_\infty \le \alpha$, as considered in~\citet{agarwal2012noisy}. Following similar derivations as in the proof of Theorem~\ref{thm:main}, the corresponding error bound can be obtained. However, as a result of this incoherence assumption, the error bound would contain an additional incoherence term which does not vanish to zero even with infinite samples. This disadvantage is overcome under the structural incoherence condition.

\end{itemize}

The statement of Theorem~\ref{thm:main} is deterministic in nature and applies to any optimum of the convex program. However, the condition on the regularization parameters~\eqref{eq:regparams} and the error bound depend on the sampled data (in particular the sample covariance matrix $\widehat{\Sigma}$), which is random. 
Therefore the key to specifying the regularization parameters, and hence obtaining error bounds independent of data, is to derive tight deviation bounds of the sample covariance matrix in terms of the $\ell_{\infty}$ and $\ell_2$ norms, such that condition~\eqref{eq:regparams} holds with high probability. These bounds can be obtained by using concentration inequalities for Gaussian distributions, which leads to the following corollary. 

\begin{cor}    \label{cor:main}
Let the same assumptions in Theorem~\ref{thm:main} hold.  Given constants $C_1 > 1$ and $C_2 \ge 1$, assume that the number of samples $n$ satisfies 
$n \ge \max \left\{ 4 C_1^2 \log p, \ C_2^2 p \right\}$, 
and that the regularization parameters satisfy 
\ALGNN{
 \lambda = 160 C_1 \overline{\sigma}^* \sqrt{\frac{\log p}{n}} \ \text{ and } \ \mu = 16 C_2 \rho^* \sqrt{\frac{p}{n}}, 
 \label{eq:regcond:cor1} 
}
where $\overline{\sigma}^* = \max_{i} \Sigstar_{i,i}$ and $\rho^* = \| \Sigstar \|_{2}$. Then with probability at least $1 - 4 p^{- 2(C_1 - 1)} - 2 \exp ( - \frac{C_2^2 p}{2})$, we have
\ALGNN{
\label{eq:errorbound:stochastic}
\| \Thetahat - \Thetastar \|_{F} \le c_1 \sqrt{\frac{s \log p}{n}} + c_2 \sqrt{\frac{r p}{n}},
}
where $c_1 = \frac{960}{\kappaL} \overline{\sigma}^*$ and $c_2 = \frac{96}{\kappaL} \rho^*$. 
\end{cor}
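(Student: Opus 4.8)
The plan is to convert the deterministic guarantee of Theorem~\ref{thm:main} into a high-probability statement by the standard two-step recipe: (i) show that the data-dependent conditions~\eqref{eq:regparams} on the regularization parameters are met by the prescribed choices of $\la$ and $\mu$ with the claimed probability, and (ii) substitute those choices into the deterministic bound~\eqref{eq:errorbound}. The only randomness enters through the sample covariance $\Sighat$, so the core of the argument consists of two concentration inequalities for $\Sighat - \Sigstar$: one in elementwise $\ell_\infty$ norm (to handle $\la$) and one in spectral norm (to handle $\mu$), followed by a union bound.

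For the $\ell_\infty$ bound I would write each entry $\Sighat_{ij}-\Sigstar_{ij}$ as an average of $n$ i.i.d.\ centered sub-exponential variables (products of jointly Gaussian coordinates) and apply a Bernstein-type tail bound; a convenient packaged version is Lemma~1 of~\citet{ravikumar2011high}, which gives $\Pr\!\big[|\Sighat_{ij}-\Sigstar_{ij}|>\delta\big]\le 4\exp\!\big(-n\delta^2/(3200(\overline{\sigma}^*)^2)\big)$, valid in the sub-Gaussian regime $\delta<40\,\overline{\sigma}^*$. Taking $\delta = 80 C_1\overline{\sigma}^*\sqrt{\log p/n}$, the hypothesis $n\ge 4C_1^2\log p$ is precisely what keeps $\delta$ in that regime, and the exponent becomes $2C_1^2\log p$; a union bound over the $p^2$ entries then yields $\|\Sighat-\Sigstar\|_\infty\le 80C_1\overline{\sigma}^*\sqrt{\log p/n}$, so that $\la=2\delta$ satisfies $\la\ge 2\|\Sighat-\Sigstar\|_\infty$, with probability at least $1-4p^{-2(C_1-1)}$.

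For the spectral bound I would factor $\Sighat = \Sigstar^{1/2}\big(\tfrac1n\sum_k z_kz_k^T\big)\Sigstar^{1/2}$ with $z_k$ i.i.d.\ standard Gaussian and invoke a standard non-asymptotic bound on Gaussian sample covariance matrices (e.g.\ Corollary~5.50 / Remark~5.40 of~\citet{vershynin2010introduction}): with probability at least $1-2\exp(-C_2^2p/2)$ one has $\big\|\tfrac1n\sum_k z_kz_k^T - \mbf{I}\big\|_2\le c(\sqrt{p/n}+p/n)$ for a universal $c$, and under $n\ge C_2^2p$ the linear term is dominated by the square-root term, giving $\|\Sighat-\Sigstar\|_2\le 8C_2\rho^*\sqrt{p/n}$; hence $\mu = 16C_2\rho^*\sqrt{p/n}$ satisfies $\mu\ge 2\|\Sighat-\Sigstar\|_2$. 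A union bound over the two events shows both conditions in~\eqref{eq:regparams} hold simultaneously with probability at least $1-4p^{-2(C_1-1)}-2\exp(-C_2^2p/2)$. On this event Theorem~\ref{thm:main} applies (its burn-in requirement~\eqref{eq:burnin} being inherited as a hypothesis), and since in the well-specified case $r_\perp^*=0$ and $\max\{\la\sqrt s,\mu\sqrt r\}\le\la\sqrt s+\mu\sqrt r$, the bound~\eqref{eq:errorbound} reduces to $\|\Thetahat-\Thetastar\|_F\le \tfrac{6}{\kappaL}(\la\sqrt s+\mu\sqrt r)$, which is exactly~\eqref{eq:errorbound:stochastic} after substituting the chosen $\la,\mu$ (absorbing $C_1,C_2$ into $c_1,c_2$, or treating them as fixed absolute constants).

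I expect the main obstacle to be the spectral-norm step: producing a clean Gaussian tail bound for $\|\Sighat-\Sigstar\|_2$ with explicit, small constants in the regime $p\asymp n$, where the deviation scales like $\sqrt{p/n}$ rather than like $\sqrt{\reff/n}$, and matching its constant to the prescribed $\mu$ --- the sample-size requirement $n\ge C_2^2p$ is what lets the awkward $p/n$ term be swallowed by $\sqrt{p/n}$. The $\ell_\infty$ step is comparatively routine once a sub-exponential Bernstein inequality is available, the only subtlety being to keep $\delta$ inside the sub-Gaussian range, which $n\ge 4C_1^2\log p$ secures. A minor bookkeeping point is that the stated $c_1,c_2$ suppress the dependence on $C_1,C_2$, so these should be regarded as fixed constants throughout.
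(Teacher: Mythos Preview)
Your proposal is correct and follows essentially the same route as the paper: verify~\eqref{eq:regparams} with high probability via the Ravikumar et al.\ elementwise tail bound for $\la$ and a spectral concentration bound for $\mu$, then plug into the deterministic Theorem~\ref{thm:main}. The only cosmetic difference is that the paper cites Lemma~3.9 of \citet{chandrasekaran2012latent} for the spectral step rather than Vershynin, and (as you already noticed) the stated $c_1,c_2$ silently drop the $C_1,C_2$ factors.
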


\emph{Remark:} The estimation error~\eqref{eq:errorbound:stochastic} consists of two terms corresponding to the sparse and low-rank components, respectively. Note its resemblance to the error bounds of robust PCA (\emph{e.g.,}~\citet{agarwal2012noisy,yang2013dirty}) and the derived bound in~\citet{chandrasekaran2012latent}. In particular, the first term in~\eqref{eq:errorbound:stochastic} was on the same order as the estimation error of a sparse GGM~\citep{ravikumar2011high}. However, due to the presence of latent variables, both the sample requirement (\emph{i.e.,} $n \gtrsim p$) and the combined error bound are worse than those for learning the sparse conditional GGM. 

Next we consider a scenario under which this additional disadvantage  is largely removed. Assume that the true marginal covariance matrix $\Sigstar$ has an effective rank $\reff := \reff(\Sigstar)$ (recall $\reff(\Sigstar) := \text{tr}(\Sigstar) / \| \Sigstar \|_2$ ) that is much smaller than $p$. Then, by using recent advances on the asymptotic behavior of the sample covariance matrix~\citep{lounici2012high}, we can obtain a much tighter bound which only depends on $p$ logarithmically, as stated in the following theorem.

\begin{thm}    \label{cor:errorbound:loweffrank}
Let the same assumptions in Theorem~\ref{thm:main} hold. Given a constant $C_1 > 1$, assume that the number of observations $n$ satisfies 
$n \ge \max \left\{ 4 C_1 \log p, \ C_3 \reff \log^2 (2p) \right\}$, 
and the regularization parameters satisfy
\ALGNN{
\lambda = 160 C_1 \overline{\sigma}^* \sqrt{\frac{\log p}{n}} \ \text{ and } \ \mu =  C_4 \rho^* \sqrt{\frac{\reff \log p}{n}},  
\label{eq:regcond:cor2}
}
where $\overline{\sigma}^* = \max_{i} \Sigstar_{i,i}$, $\rho^* = \| \Sigstar \|_{2}$, and $C_3, C_4 > 0$ are sufficiently large constants. Then with probability at least $1 - 2 p^{-2 (C_1 - 1)} - (2p)^{-1}$, we have
\ALGNN{
\label{eq:errorbound:stochasticnew}
\| \Thetahat - \Thetastar \|_{F} \le \tilde{c}_1 \sqrt{\frac{s \log p}{n}} + \tilde{c}_2  \sqrt{\frac{\reff \cdot r \log(2p)}{n}},
}
where $\tilde{c}_1 = \frac{960}{\kappaL} \overline{\sigma}^*$, $\tilde{c}_2 = \frac{8 C_4 }{3 \kappaL} \rho^*$. 
\end{thm}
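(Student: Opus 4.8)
The plan is to run Theorem~\ref{thm:main} on a high-probability event on which the regularization parameters $(\la,\mu)$ from~\eqref{eq:regcond:cor2} dominate the sampling fluctuations, i.e. on which the deterministic preconditions~\eqref{eq:regparams} hold: $\la \ge 2\|\Sigstar-\Sighat\|_\infty$ and $\mu \ge 2\|\Sigstar-\Sighat\|_2$. Since the right-hand side of the deterministic bound~\eqref{eq:errorbound} is nondecreasing in $\la$ and $\mu$, it then suffices to substitute the chosen values into~\eqref{eq:errorbound} and collect constants. Thus the work reduces to two concentration statements for the Gaussian sample covariance matrix $\Sighat$ — one in the elementwise $\ell_\infty$ norm, one in the operator norm — the second one being where the effective rank is exploited. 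The burn-in requirement~\eqref{eq:burnin} is inherited as an assumption (``the same assumptions in Theorem~\ref{thm:main}''), so it need not be re-derived here.

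\emph{Step 1 (elementwise deviation).} For jointly Gaussian observations, each centered entry $\Sighat_{jk}-\Sigstar_{jk}$ is an average of $n$ i.i.d. mean-zero sub-exponential variables, so a Bernstein-type tail bound gives $\Pr(|\Sighat_{jk}-\Sigstar_{jk}|>t)\le 2\exp(-c\,n t^2/(\overline{\sigma}^*)^2)$ for $t$ within a constant multiple of $\overline{\sigma}^*=\max_i\Sigstar_{ii}$. Taking $t=\la/2=80 C_1\overline{\sigma}^*\sqrt{\log p/n}$ and a union bound over the $\le p^2$ entries, the event $\{\la\ge 2\|\Sigstar-\Sighat\|_\infty\}$ fails with probability at most $2p^{-2(C_1-1)}$, provided $n\ge 4C_1\log p$ (this sample-size condition is precisely what keeps $t$ in the range where the Bernstein bound is Gaussian-like); the numerical constant $160$ in $\la$ absorbs the Bernstein constant $c$. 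This reproduces the first term of the claimed failure probability.

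\emph{Step 2 (operator-norm deviation via effective rank).} This is the crux. Invoking the sharp operator-norm concentration bound of~\citet{lounici2012high} for the Gaussian sample covariance with deviation parameter $t=\log(2p)$, with probability at least $1-(2p)^{-1}$,
\[
\|\Sighat-\Sigstar\|_2 \;\le\; C\,\rho^*\,\max\!\left\{\sqrt{\tfrac{\reff\log(2p)}{n}},\ \tfrac{\reff\log(2p)}{n}\right\},
\]
where $\rho^*=\|\Sigstar\|_2$ and $\reff=\reff(\Sigstar)$. The assumption $n\ge C_3\,\reff\log^2(2p)$ forces $\reff\log(2p)/n\le 1$, so the square-root term dominates and $\|\Sighat-\Sigstar\|_2\le C\rho^*\sqrt{\reff\log(2p)/n}$. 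Converting $\log(2p)\le 2\log p$ and choosing $C_4$ large enough to absorb $C$ yields $\mu=C_4\rho^*\sqrt{\reff\log p/n}\ge 2\|\Sigstar-\Sighat\|_2$ on this event, which accounts for the $(2p)^{-1}$ term in the failure probability and for the choice of $\mu$.

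\emph{Step 3 (assembly) and main obstacle.} Intersecting the two events (union-bounding the failure probability to $2p^{-2(C_1-1)}+(2p)^{-1}$) makes~\eqref{eq:regparams} hold, so Theorem~\ref{thm:main} gives $\|\Thetahat-\Thetastar\|_F\le\frac{6}{\kappaL}\max\{\la\sqrt{s},\mu\sqrt{r}\}+\sqrt{8r_\perp^*/\kappaL}$. Bounding $\max\{a,b\}\le a+b$, plugging in $\la$ and $\mu$, and folding constants into $\tilde c_1=\frac{960}{\kappaL}\overline{\sigma}^*$ and $\tilde c_2=\frac{8C_4}{3\kappaL}\rho^*$ produces the two advertised terms $\tilde c_1\sqrt{s\log p/n}+\tilde c_2\sqrt{\reff\cdot r\,\log(2p)/n}$, the approximation-error term $r_\perp^*$ being taken to vanish in the exactly sparse-plus-low-rank regime as remarked after Theorem~\ref{thm:main}. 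I expect Step~2 to be the main obstacle: one must cite the correct form of the Lounici bound, calibrate its deviation parameter so that the probability and the $\log(2p)$ factor match the statement, and verify that $n\gtrsim\reff\log^2(2p)$ is exactly the threshold that renders the linear-in-$\reff/n$ term negligible against the $\sqrt{\reff/n}$ term; Steps~1 and~3 are routine bookkeeping.
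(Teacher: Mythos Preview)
Your proposal is correct and follows essentially the same route as the paper: verify~\eqref{eq:regparams} with high probability by combining an entrywise sub-exponential tail bound (for $\la$) with Lounici's effective-rank operator-norm bound (for $\mu$), use $n\gtrsim \reff\log^2(2p)$ to make the square-root term dominate, and then plug the resulting $(\la,\mu)$ into the deterministic bound of Theorem~\ref{thm:main}. The paper's own proof sketch is in fact terser than yours on Step~3 (it does not spell out the $\max\{a,b\}\le a+b$ step or the vanishing of $r_\perp^*$), so your assembly is, if anything, a bit more explicit.
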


\begin{proof}[Proof sketch] 
Same as Corollary~\ref{cor:main}, we need to verify that the choices of regularization parameters~\eqref{eq:regcond:cor2} satisfy the condition~\eqref{eq:regparams} with high probability. Since the choice of $\la$ has been verified in Corollary~\ref{cor:main}, it only remains to verify the condition on $\mu$. 
To this end, we make use of the following sharp bound on the spectral norm deviation of the sample covariance matrix:
\begin{lem}[\citet{lounici2012high}]
Let $\Sighat$ be a sample covariance matrix constructed from $n$ \emph{i.i.d.} samples from a $p$-dimensional Gaussian distribution $\mcal{N}(0, \Sigstar)$. Then with probability at least $1 - (2p)^{-1}$,
\ALGN{
& \| \Sighat - \Sigstar \|_2 \le C \| \Sigstar \|_2 \max \left\{\sqrt{\frac{2 \reff \log (2p)}{n}}, \frac{2 \reff \log (2p)(3/8 + \log (2pn)}{n} \right\},
}
where $C > 0$ is an absolute constant.
\end{lem}
Then as commented in~\citet{lounici2012high} (Prop.~3), when the sample size $n$ is sufficiently large such that $n \ge C_3 \reff \log^2 \max\{ 2p, n\}$, where $C_3 > 0$ is a large constant, the choice of regularization parameter $\mu$ as in~\eqref{eq:regcond:cor2} suffices for the condition~\eqref{eq:regparams} to hold with high probability.  \end{proof}

Notice that when $\reff \ll p$, the error bound~\eqref{eq:errorbound:stochasticnew} is significantly tighter than the bound~\eqref{eq:errorbound:stochastic}. Also the sample requirement $n \gtrsim \reff \log(p)$ is much milder. This result implies the efficiency of LVGGM learning when the true covariance model has a low effective rank.

\section{Experiments}
\label{sec:exp}

We use a set of simulations on synthetic data to verify our reduced effective rank assumption on the covariance matrix of LVGGM, and the derived error bounds in Theorem~\ref{cor:errorbound:loweffrank}. 

\subsection{Effective Rank of Covariance of LVGGM}
\label{sec:effrankexp}

To better understand the effective rank of the covariance matrix of LVGGM, it is convenient to consider a hierarchical generating process for the observed variables: 
${x}_O \sim \mbf{A} {x}_L + z$,
where ${x}_L \sim \mcal{N}(\mbf{0}, \mbf{\Omega}_{L,L})$ are the latent variables, $\mbf{A} := \mbf{J}_{O,O}^{-1} \mbf{J}_{O,L} \in \mathbb{R}^{p \times r}$, and ${z} \sim \mcal{N}(\mbf{0}, \mbf{S}^{-1})$ captures the conditional effects. 
The marginal covariance matrix of the observed variables can be represented as 
\ALGNN{
\mbf{\Sigma} = \underbrace{\mbf{A} \mbf{\Omega}_{L,L} \mbf{A}^{T}}_{\mbf{G}} + \mbf{S}^{-1},
\label{eq:covmodel}
}
where $\mbf{G}$ is a low-rank covariance matrix (global effects), and $\mbf{S}^{-1}$ is a non-sparse covariance matrix (conditionally local effects) whose inverse is sparse.  
While the low-rank global effects naturally result in a concentrated spectrum, the sparse-inverse local effects generally contribute to a diffuse spectrum. The effective rank, which is the sum of all eigenvalues divided by the magnitude of the largest one, depends on the relative energy ratio between $\mbf{G}$ and $\mbf{S}^{-1}$. 

Since an exact characterization of the effective rank in terms of $\mbf{A}$, $\mbf{\Omega}_{L,L}$, and $\mbf{S}$ tends to be difficult, we use Monte Carlo simulations to investigate synthetic LVGGM that conform to our assumptions. We generate LVGGM with independent latent variables (\emph{i.e.,} diagonal $\mbf{J}_{L,L}$), dense latent-observed submatrix $\mbf{J}_{L,O}$, and a sparse conditional GGM $\mbf{J}_{O,O}$ for observed variable with a random sparsity pattern (sparsity level $\approx 5\%$).  We fix the number of latent variables to be 10, and vary the number of observed variables $p = \{80, 120, 200, 500\}$. By scaling the magnitudes of the elements in the latent variable submatrix, we sweep through the relative energy ratio between the global and local factors, \emph{i.e.,} $\text{Tr}(\mbf{G}) / \text{Tr}(\mbf{S}^{-1})$ from 0.1 to 10.  After 550 realizations for each value of $p$, we plot the empirical effective ranks of observed covariance matrices in Figure~\ref{fig:effrank}. 

\begin{figure}[t!]
\centering
\includegraphics[width=0.5\textwidth]{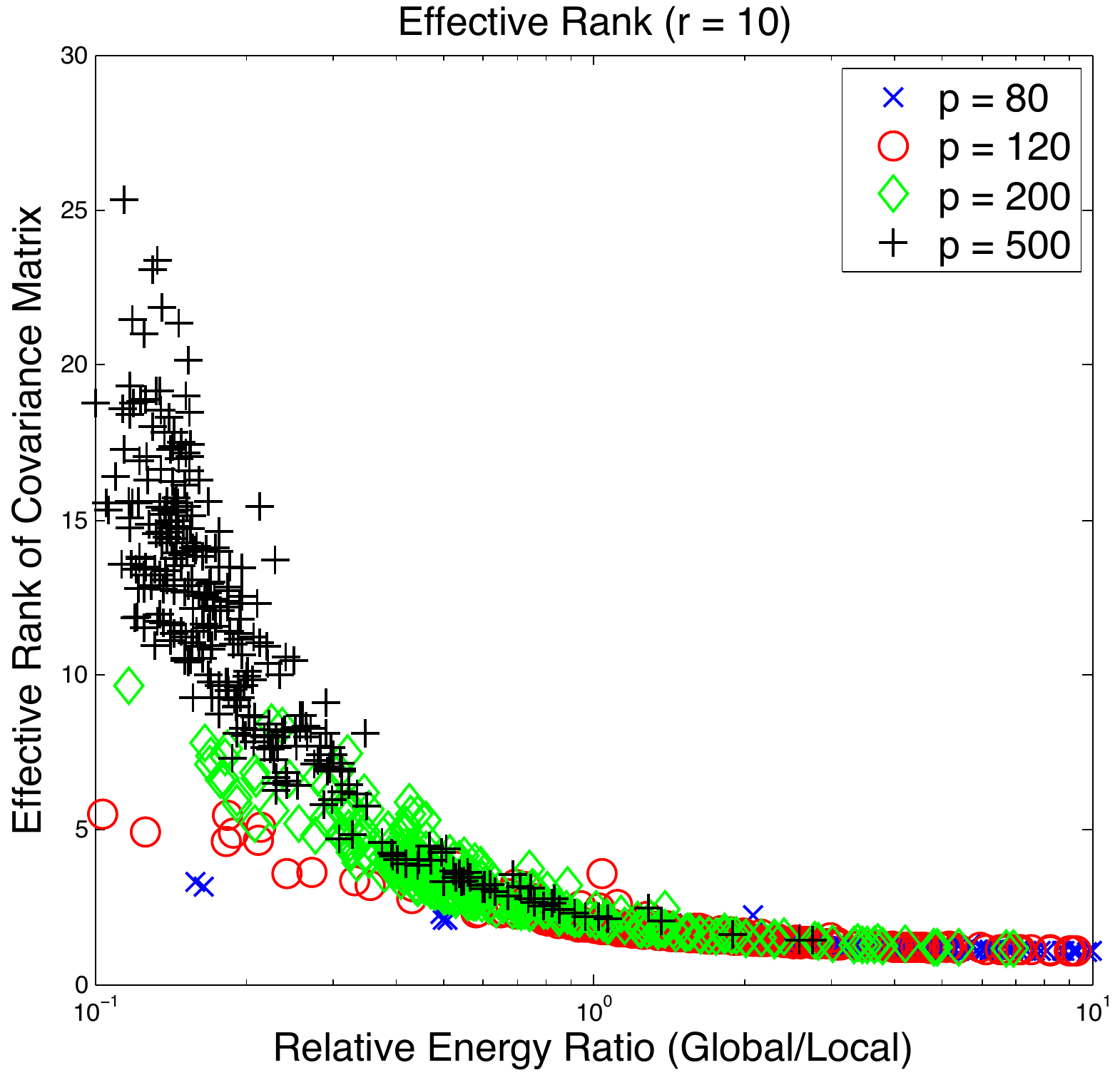}
\caption{Effective ranks of covariance matrices of LVGGM with various global/local energy ratios.}
\label{fig:effrank}
\end{figure}

\begin{figure}
\centering
    \includegraphics[width=0.6\textwidth]{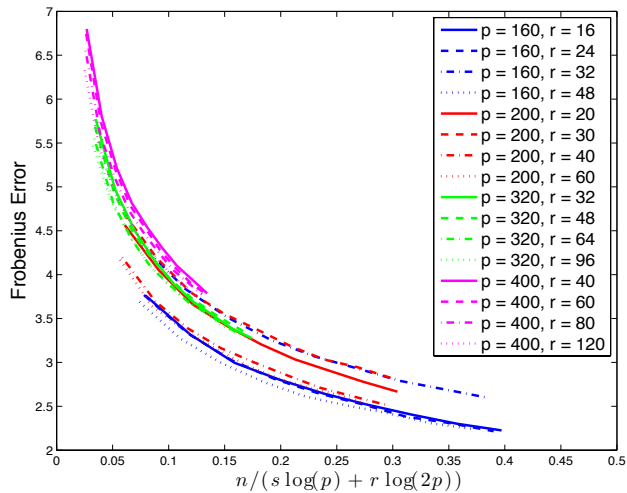}
\caption{Simulations for chain graphical models with latent variables. Plots of Frobenius norm error $\|\widehat{\Theta} - \Theta^*\|_F$ versus the rescaled sample size $n/(s \log(p) + r \log(2p))$.}
\label{fig:errorplot}
\end{figure}

As seen in the figure, when the global factor dominates (\emph{i.e.,} the ratio is large), the effective rank of the covariance matrix is very small, as expected.
On the other hand, when the local effects become stronger (\emph{e.g.,} when the number of observed variables $p$ increases) the effective rank increases, but at a very mild rate. In particular, when $p$ increases from 80 to 500, the maximum empirical effective rank in our simulation only increases from 4 to 26. For all of our simulated LVGGM, the empirical effective ranks are observed as at least an order of magnitude smaller than $p$. This mild growing rate of the effective rank (compared to $p$) will lead to our improved error bound in Theorem~\ref{cor:errorbound:loweffrank} to hold. 

\subsection{Frobenius Norm Error of LVGGM Estimation}

We simulate LVGGM data with number of observed variables $p = \{160, 200, 320, 400\}$ and number of latent variables in the set $r = \{0.1, 0.15, 0.2, 0.3\}p$. The sparse conditional GGM is a chain graph whose associated precision matrix is tridiagonal with off-diagonal elements $\mbf{S}_{i, i-1} = \mbf{S}_{i, i+1} =  0.4 \mbf{S}_{i,i}$ for $i = \{ 2, \ldots, p-1\}$. For each configuration of $p$ and $r$, we draw $n$ samples from the LVGGM, where $n$ ranges from 200 to 1000. Using these samples, the precision matrix $\widehat{\Theta}$ is learned by solving the regularized ML estimation problem~\eqref{eq:MestLVGGM}. As shown in Section~\ref{sec:effrankexp}, the effective rank of the covariance matrix grows mildly. Then Theorem~\ref{cor:errorbound:loweffrank} predicts that the Frobenius error of the estimated precision matrix of LVGGM should scale as $\| \widehat{\Theta} - \Theta^* \|_F \asymp \sqrt{(s \log (p) + r \log (2p))/n}$, when the regularization parameters are chosen such that $\la \asymp \overline{\sigma}^* \sqrt{\frac{\log (p)}{n}}$ and $\mu \asymp \rho^* \sqrt{\frac{\reff \log (p)}{n}}$. Guided by this theoretical result, we set the regularization parameters as $\la = C_a \overline{\sigma}^* \sqrt{\frac{\log (p)}{n}}$ and $\mu = C_b \rho^* \sqrt{\frac{\reff \log (p)}{n}}$, where constants $C_a$ and $C_b$ are cross-validated 
and then fixed for all test data sets with different configurations. We plot the Frobenius estimation errors against the rescaled sample size $n/(s \log(p) + r \log(2p))$ in Figure~\ref{fig:errorplot}. 
With a wide range of configurations, almost all the empirical error curves for models align and have the form of $f(t) \propto t^{-1/2}$ when the sample size is rescaled, as predicted by Theorem~\ref{cor:errorbound:loweffrank}.   
In practice when the true model is unknown, one could set the regularization parameters according to the sample versions of the quantities $\overline{\sigma}^*$ and $\rho^*$, as discussed in~\citet{lounici2012high}.

\section{Conclusions}
\label{sec:conclude}

We consider a family of latent variable Gaussian graphical model whose precision matrix has a sparse plus low-rank structure. We derive parameter error bounds for regularized maximum likelihood estimation. Future work includes extending the framework to other distributions and the application to tasks such as prediction and detection.

\section*{Acknowledgement}
{This work is in part supported by ARO grant W911NF-11-1-0391. 
The authors thank the anonymous reviewers for their valuable comments, along with Jason Lee and Yuekai Sun for helpful discussions.}

\bibliography{lvggm_icml}

\begin{thebibliography}{24}
\providecommand{\natexlab}[1]{#1}
\providecommand{\url}[1]{\texttt{#1}}
\expandafter\ifx\csname urlstyle\endcsname\relax
  \providecommand{\doi}[1]{doi: #1}\else
  \providecommand{\doi}{doi: \begingroup \urlstyle{rm}\Url}\fi

\bibitem[Agarwal et~al.(2012)Agarwal, Negahban, and
  Wainwright]{agarwal2012noisy}
Agarwal, Alekh, Negahban, Sahand, and Wainwright, Martin~J.
\newblock Noisy matrix decomposition via convex relaxation: Optimal rates in
  high dimensions.
\newblock \emph{The Annals of Statistics}, 40\penalty0 (2):\penalty0
  1171--1197, 2012.

\bibitem[Bickel et~al.(2009)Bickel, Ritov, and
  Tsybakov]{bickel2009simultaneous}
Bickel, Peter~J, Ritov, Ya’acov, and Tsybakov, Alexandre~B.
\newblock Simultaneous analysis of {Lasso} and {Dantzig} selector.
\newblock \emph{The Annals of Statistics}, 37\penalty0 (4):\penalty0
  1705--1732, 2009.

\bibitem[Cand{\`e}s et~al.(2011)Cand{\`e}s, Li, Ma, and
  Wright]{candes2011robust}
Cand{\`e}s, Emmanuel~J, Li, Xiaodong, Ma, Yi, and Wright, John.
\newblock Robust principal component analysis?
\newblock \emph{Journal of the ACM (JACM)}, 58\penalty0 (3):\penalty0 11, 2011.

\bibitem[Chandrasekaran et~al.(2012)Chandrasekaran, Parrilo, and
  Willsky]{chandrasekaran2012latent}
Chandrasekaran, Venkat, Parrilo, Pablo~A, and Willsky, Alan~S.
\newblock Latent variable graphical model selection via convex optimization.
\newblock \emph{Annals of Statistics}, 40\penalty0 (4):\penalty0 1935--1967,
  2012.

\bibitem[Chen et~al.(2011)Chen, Wiesel, and Hero]{chen2011robust}
Chen, Yilun, Wiesel, Ami, and Hero, Alfred~O.
\newblock Robust shrinkage estimation of high-dimensional covariance matrices.
\newblock \emph{Signal Processing, IEEE Transactions on}, 59\penalty0
  (9):\penalty0 4097--4107, 2011.

\bibitem[Choi et~al.(2010)Choi, Chandrasekaran, and Willsky]{choi2010gaussian}
Choi, Myung~Jin, Chandrasekaran, Venkat, and Willsky, Alan~S.
\newblock Gaussian multiresolution models: Exploiting sparse {Markov} and
  covariance structure.
\newblock \emph{Signal Processing, IEEE Transactions on}, 58\penalty0
  (3):\penalty0 1012--1024, 2010.

\bibitem[Choi et~al.(2011)Choi, Tan, Anandkumar, and Willsky]{choi2011learning}
Choi, Myung~Jin, Tan, Vincent~YF, Anandkumar, Animashree, and Willsky, Alan~S.
\newblock Learning latent tree graphical models.
\newblock \emph{Journal of Machine Learning Research}, 12:\penalty0 1729--1770,
  2011.

\bibitem[Friedman et~al.(2008)Friedman, Hastie, and
  Tibshirani]{friedman2008sparse}
Friedman, Jerome, Hastie, Trevor, and Tibshirani, Robert.
\newblock Sparse inverse covariance estimation with the graphical lasso.
\newblock \emph{Biostatistics}, 9\penalty0 (3):\penalty0 432--441, 2008.

\bibitem[Hsieh et~al.(2013)Hsieh, Sustik, Dhillon, Ravikumar, and
  Poldrack]{hsieh2013sparse}
Hsieh, Cho-Jui, Sustik, M{\'a}ty{\'a}s~A, Dhillon, Inderjit, Ravikumar,
  Pradeep, and Poldrack, Russell.
\newblock {BIG} \& {QUIC}: Sparse inverse covariance estimation for a million
  variables.
\newblock In \emph{Advances in Neural Information Processing Systems}, pp.\
  3165--3173, 2013.

\bibitem[Kakade et~al.(2010)Kakade, Shamir, Sindharan, and
  Tewari]{kakade2009learning}
Kakade, Sham, Shamir, Ohad, Sindharan, Karthik, and Tewari, Ambuj.
\newblock Learning exponential families in high-dimensions: Strong convexity
  and sparsity.
\newblock In \emph{International Conference on Artificial Intelligence and
  Statistics}, pp.\  381--388, 2010.

\bibitem[Kalaitzis \& Lawrence(2012)Kalaitzis and
  Lawrence]{kalaitzis2012residual}
Kalaitzis, Alfredo and Lawrence, Neil~D.
\newblock Residual component analysis: Generalising {PCA} for more flexible
  inference in linear-{Gaussian} models.
\newblock In \emph{Proceedings of the 29th International Conference on Machine
  Learning (ICML-12)}, pp.\  209--216, 2012.

\bibitem[Lauritzen(1996)]{lauritzen1996graphical}
Lauritzen, S.L.
\newblock \emph{Graphical models}, volume~17.
\newblock Oxford University Press, USA, 1996.

\bibitem[Ledoit \& Wolf(2003)Ledoit and Wolf]{ledoit2003improved}
Ledoit, Olivier and Wolf, Michael.
\newblock Improved estimation of the covariance matrix of stock returns with an
  application to portfolio selection.
\newblock \emph{Journal of Empirical Finance}, 10\penalty0 (5):\penalty0
  603--621, 2003.

\bibitem[Lee et~al.(2013)Lee, Sun, and Taylor]{lee2013model}
Lee, Jason, Sun, Yuekai, and Taylor, Jonathan~E.
\newblock On model selection consistency of penalized {M}-estimators: a
  geometric theory.
\newblock In \emph{Advances in Neural Information Processing Systems}, pp.\
  342--350, 2013.

\bibitem[Liu \& Willsky(2013)Liu and Willsky]{liu2013learning}
Liu, Ying and Willsky, Alan.
\newblock Learning {Gaussian} graphical models with observed or latent {FVSs}.
\newblock In \emph{Advances in Neural Information Processing Systems}, pp.\
  1833--1841, 2013.

\bibitem[Lounici(2012)]{lounici2012high}
Lounici, Karim.
\newblock High-dimensional covariance matrix estimation with missing
  observations.
\newblock \emph{arXiv preprint arXiv:1201.2577}, 2012.

\bibitem[Ma et~al.(2013)Ma, Xue, and Zou]{ma2012alternating}
Ma, Shiqian, Xue, Lingzhou, and Zou, Hui.
\newblock Alternating direction methods for latent variable {Gaussian}
  graphical model selection.
\newblock \emph{Neural computation}, 25\penalty0 (8):\penalty0 2172--2198,
  2013.

\bibitem[Malioutov et~al.(2006)Malioutov, Johnson, and
  Willsky]{malioutov2006walk}
Malioutov, Dmitry~M, Johnson, Jason~K, and Willsky, Alan~S.
\newblock Walk-sums and belief propagation in {Gaussian} graphical models.
\newblock \emph{The Journal of Machine Learning Research}, 7:\penalty0
  2031--2064, 2006.

\bibitem[Negahban et~al.(2012)Negahban, Ravikumar, Wainwright, and
  Yu]{negahban2012unified}
Negahban, Sahand~N, Ravikumar, Pradeep, Wainwright, Martin~J, and Yu, Bin.
\newblock A unified framework for high-dimensional analysis of {M}-estimators
  with decomposable regularizers.
\newblock \emph{Statistical Science}, 27\penalty0 (4):\penalty0 538--557, 2012.

\bibitem[Ravikumar et~al.(2011)Ravikumar, Wainwright, Raskutti, and
  Yu]{ravikumar2011high}
Ravikumar, Pradeep, Wainwright, Martin~J, Raskutti, Garvesh, and Yu, Bin.
\newblock High-dimensional covariance estimation by minimizing
  {$\ell$}1-penalized log-determinant divergence.
\newblock \emph{Electronic Journal of Statistics}, 5:\penalty0 935--980, 2011.

\bibitem[Rothman et~al.(2008)Rothman, Bickel, Levina, and
  Zhu]{rothman2008sparse}
Rothman, Adam~J, Bickel, Peter~J, Levina, Elizaveta, and Zhu, Ji.
\newblock Sparse permutation invariant covariance estimation.
\newblock \emph{Electronic Journal of Statistics}, 2:\penalty0 494--515, 2008.

\bibitem[Vershynin(2010)]{vershynin2010introduction}
Vershynin, Roman.
\newblock Introduction to the non-asymptotic analysis of random matrices.
\newblock \emph{arXiv preprint arXiv:1011.3027}, 2010.

\bibitem[Xu et~al.(2012)Xu, Caramanis, and Sanghavi]{xu2012robust}
Xu, Huan, Caramanis, Constantine, and Sanghavi, Sujay.
\newblock Robust {PCA} via outlier pursuit.
\newblock \emph{Information Theory, IEEE Transactions on}, 58\penalty0
  (5):\penalty0 3047--3064, 2012.

\bibitem[Yang \& Ravikumar(2013)Yang and Ravikumar]{yang2013dirty}
Yang, Eunho and Ravikumar, Pradeep.
\newblock Dirty statistical models.
\newblock In \emph{Advances in Neural Information Processing Systems 26}, pp.\
  611--619. 2013.

\end{thebibliography}
\bibliographystyle{icml2014}

\newpage
\appendix

\begin{center}
{\LARGE
\textbf{Appendix}
}
\end{center}

\section{Motivating Real-World Examples}
\label{sec:motivatingExp}

In this section, we use two real-world examples, movie rating and stock return price data sets, to motivate the LVGGM. For each data set, we manually choose three groups of variables where variables in one group are related. Effectively we have injected certain global effects, \emph{i.e.,} group effect, in the data. According to the decomposition of covariance matrix of a LVGGM (see Eq.~\eqref{eq:covmodel}), we examine whether these effects can be extracted using a low-rank component $\mbf{G}$ in the covariance matrix, and whether the remaining residual effects have a precision matrix $\mbf{S}$ that is sparser than its inverse $\mbf{S}^{-1}$.

We emphasize that for these two examples we are using  eigen-decomposition to decompose the covariance matrix into two components. However, this is not related to the regularized ML estimation algorithm proposed in Section~\ref{sec:regML}. The low-rank and sparse components that would be learned from the regularized ML problem are different to what we are showing here.

\textbf{Movielens data.} Using the \emph{Movielens}\footnote{\footnotesize{http://movielens.org}} movie rating data set, we choose the rating scores given by the most active 600 users and for the highest rated 20 movies from each of the following three genres: \emph{Horror}, \emph{Children's}, and \emph{Action}. This results in a $600 \times 60$ rating matrix with $56\%$ completeness. We consider the joint distribution of 60 movie rating variables as a LVGGM with three latent variables. Each user's rating vector is treated as an \emph{i.i.d.} sample from the LVGGM.  Since the true covariance matrix is unknown, we use the sample covariance matrix as a proxy (as $n \gg p$). Each covariance element is weighted by the actual number of observations to compensate for the missingness in the data.

To validate this intuition, we decompose the rating matrix into two matrices: a rank-3 matrix spanned by the top three leading singular vectors, and a residual matrix capturing the conditional effects. We denote the covariance matrix of the low-rank component as $\widetilde{\mbf{G}}$, and the sparse precision matrix of the residual component as $\widetilde{\mbf{S}}$. A heat map of the normalized $\widetilde{\mbf{G}}$ is shown in Figure~\ref{subfig:movie:G}, and the sparsity patterns of the normalized $\widetilde{\mbf{S}}$ and $\widetilde{\mbf{S}}^{-1}$ (\emph{i.e.,} the covariance of the residual) are shown in Figure~\ref{subfig:movie:S}, thresholded by $0.1$.  As expected, the low-rank $\widetilde{\mbf{G}}$ captures the structure of the global effects (\emph{i.e.,} genre), and the residual can be well-modeled by a sparse GGM -- as we observe that the precision matrix is much sparser than the covariance matrix. In addition, we find the effective rank of the covariance is equal to $7.4$, much smaller than the number of variables, $60$. 

\textbf{Stock return data.} Next, we validate the LVGGM assumptions on a monthly stock return data set\footnote{\footnotesize{http://people.csail.mit.edu/myungjin/latentTree.html}}, which consists of 216 samples of 24 stocks from three sectors: \emph{Technologies}, \emph{Industrials}, and \emph{Financials}. Similar to the \emph{Movielens} data, we reconstruct the low-rank component matrix $\widetilde{\mbf{G}}$ for the global effects (with rank = 4), and the sparse precision matrix $\widetilde{\mbf{S}}$ for the residual. The heat map of $\widetilde{\mbf{G}}$ and the sparsity patterns of $\widetilde{\mbf{S}}$ and $\widetilde{\mbf{S}}^{-1}$ are shown in Figure~\ref{subfig:stock:G} and~\ref{subfig:stock:S}, respectively. Again, the global structure (\emph{i.e.}, sector) is manifested in the low-rank matrix, and the conditional effects have a much sparser precision matrix than the covariance. We find the effective rank is equal to $2.9$, which again is much smaller than the total number of variables, $24$.

\begin{figure*}[h!]
\begin{center}
    \begin{subfigure}[$\widetilde{\mbf{G}}$ of \emph{Movielens}]{
        \centering
        \includegraphics[width=0.27\textwidth]{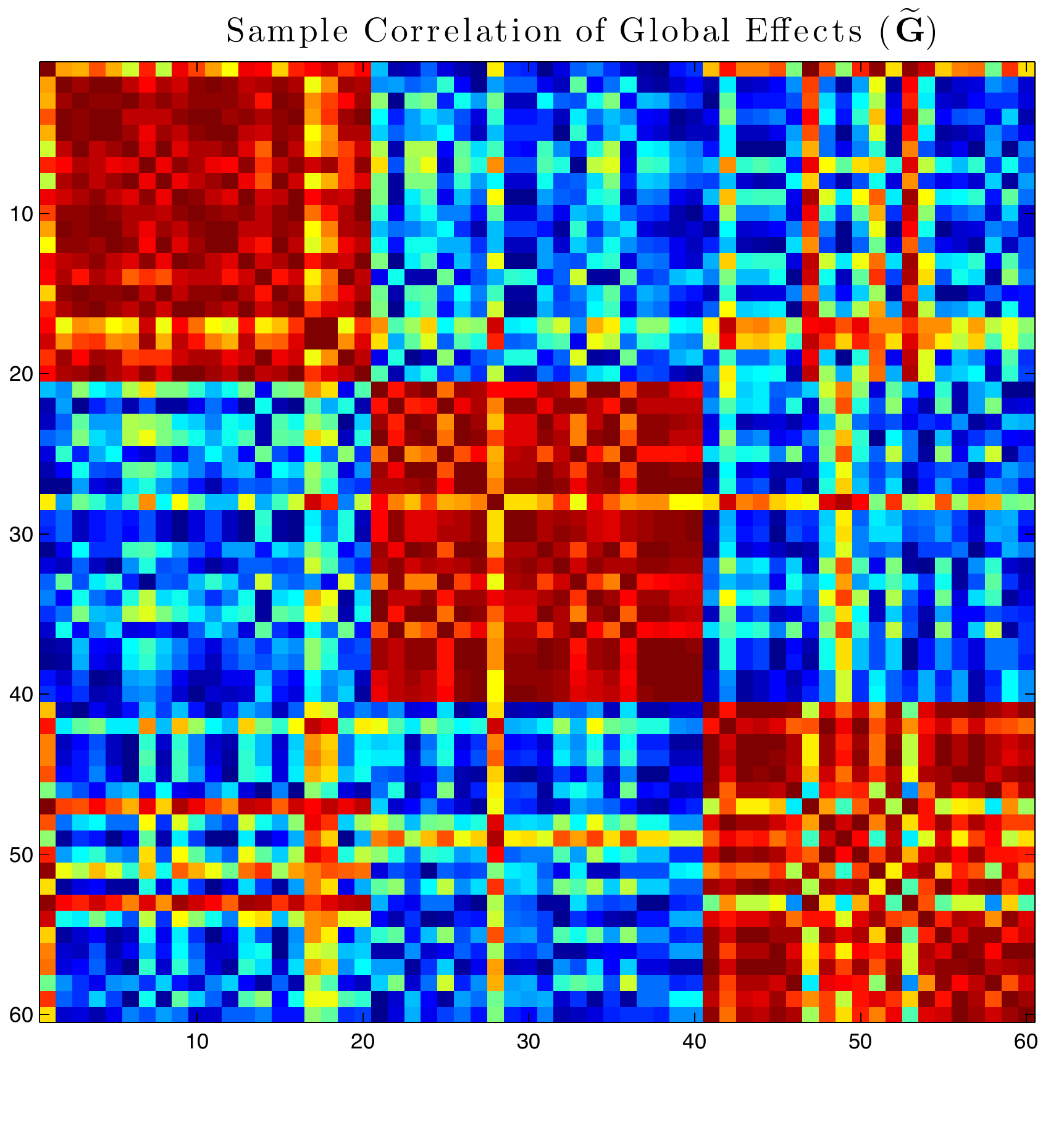}    
        \label{subfig:movie:G}
    }
    \end{subfigure}
    \begin{subfigure}[$|\widetilde{\mbf{S}}|$ and $|\widetilde{\mbf{S}}^{-1}|$ of \emph{Movielens}]{
        \centering
        \includegraphics[width=0.6\textwidth]{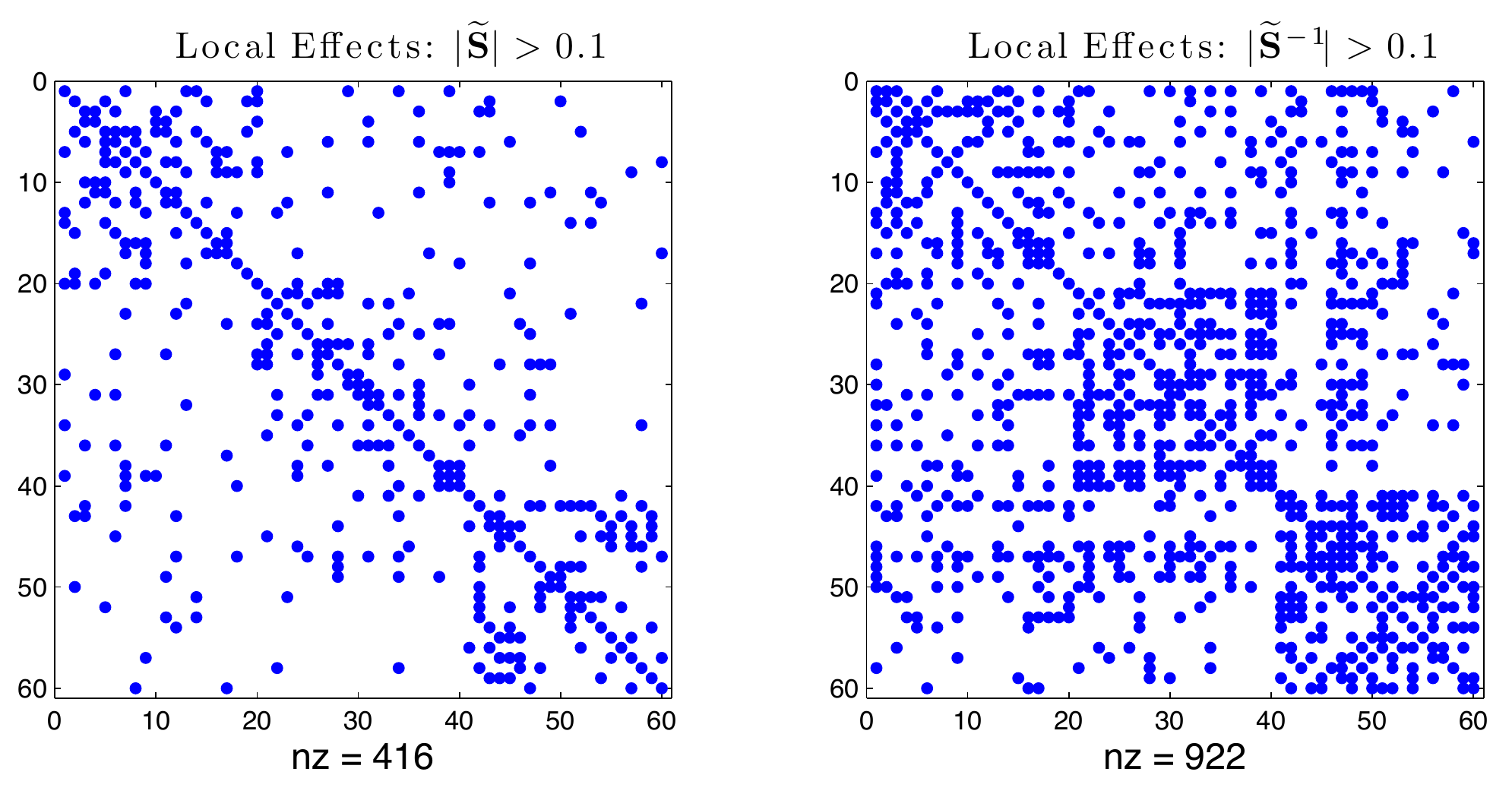}  
        \label{subfig:movie:S}
    }
    \end{subfigure}
    \\
    \begin{subfigure}[$\widetilde{\mbf{G}}$ of stock data]{
        \centering
        \includegraphics[width=0.27\textwidth]{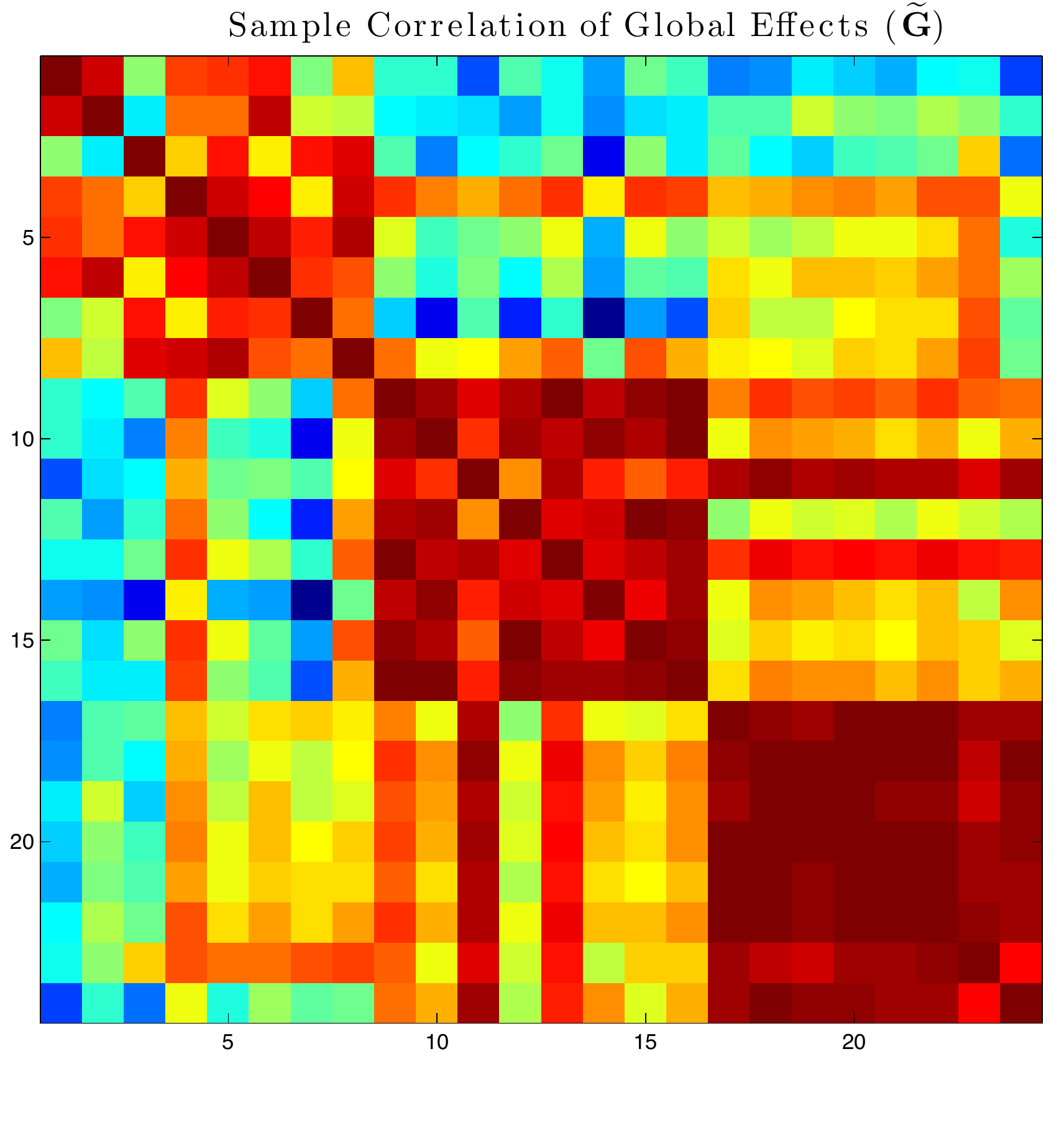}
        \label{subfig:stock:G}    
    }
    \end{subfigure}
    \begin{subfigure}[$|\widetilde{\mbf{S}}|$ and $|\widetilde{\mbf{S}}^{-1}|$ of stock data]{
        \centering
        \includegraphics[width=0.6\textwidth]{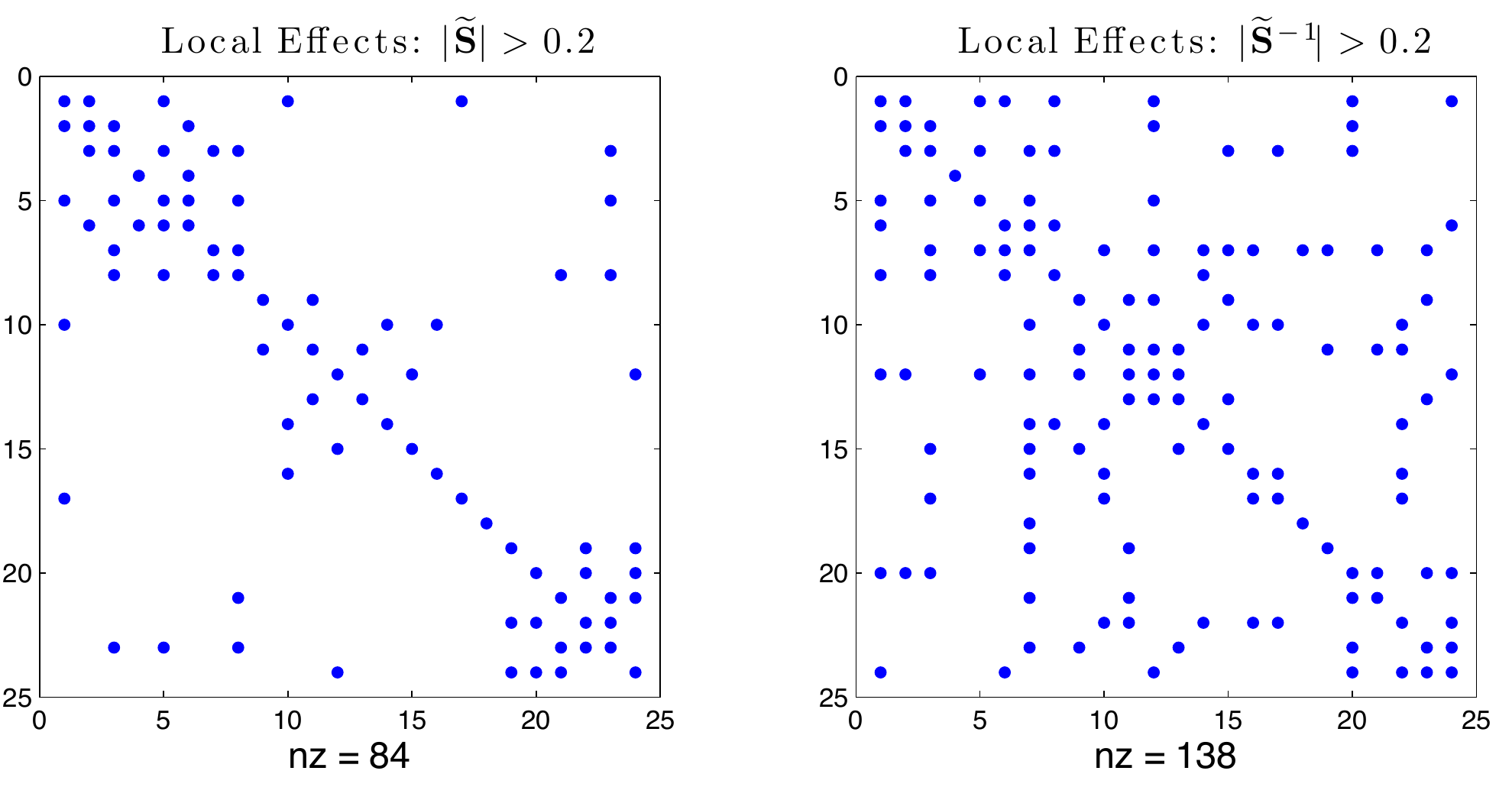}   
        \label{subfig:stock:S} 
    }
    \end{subfigure}
\end{center}
\caption{Illustration of LVGGM assumptions on \emph{Movielens} and stock return data sets. (a)(c): Heat maps of the leading low-rank matrices capturing the global effects. (b)(d): Sparsity patterns of the precision and covariance matrices of the remaining conditional effects.}
\end{figure*}


\section{Proof of Theorem~\ref{thm:main}}
\label{sec:proof:main}

In~\citet{yang2013dirty}, the authors proved a general superpositioned parameter estimate error bound using the decomposable regularized framework. Theorem~\ref{thm:main} can be proven similarly by specializing the result in~\citet{yang2013dirty} to the LVGGM learning problem~\eqref{eq:MestLVGGM}. Then it suffices to verify the two critical conditions (C3) and (C4) in~\citet{yang2013dirty} (the other two conditions are trivial to verify for our problem), which we introduce and elaborate in this section. 

\paragraph{Restricted strong convexity.} 
Let $\delta \mcal{L}({\Delta}; {\Theta}^*)$ denote the remainder term in first-order Taylor series approximation of the loss function $\mcal{L}(\cdot)$ at the true parameter $\Theta^{*}$ with respect to a perturbation $\Delta = \Theta^* - \widehat{\Theta}$:
\ALGNN{
\delta \mcal{L}({\Delta}; {\Theta}^*) := \mcal{L}(\Theta^* + \Delta) - \mcal{L}(\Theta^*) - \langle \nabla \mcal{L}(\Theta^*), \Delta \rangle.
}
In \citet{negahban2012unified}, the authors introduce the \emph{restricted strong convexity} (RSC) condition, which specifies that given some set $\mathbb{C} \subseteq \mathbb{R}^{p \times p}$, there exists some curvature parameter $\kappaL > 0$ and tolerance function $\tau_{\mcal{L}}$, such that the following holds:
\ALGNN{
\delta \mcal{L}({\Delta}; {\Theta}^*) \ge \kappaL \| {\Delta} \|^2_F - \tau_{\mcal{L}}(\Theta^*), \ \forall \Delta \in \mathbb{C}.
\label{eq:RSC}
}

The RSC condition guarantees sufficient curvature of the loss function at the true parameter along some directions specified by set $\mathbb{C}$. This condition is critical for consistent estimation in the high-dimensional regime, since standard strong convexity usually does not hold in the $p \gg n$ setting.

The following shows that the restricted Fisher eigenvalue conditions defined in Assumption~\ref{assump:rfe} implies the RSC condition.
 
\begin{lem}[RSC condition]	\label{lem:rsc}
Suppose Assumption~\ref{assump:rfe} holds for the true marginal precision matrix $\Thetastar$ and let $M > 2$.  Then for all $\Delta \in \mathbb{C}(E) \cup \mathbb{C}(U)$, such that $\| \Delta \|_{\mcal{F}^*}^2 \le \frac{1}{2 M^2}$, the RSC condition~\eqref{eq:RSC} is satisfied with the curvature parameter $\kappaL = \frac{M-2}{2(M-1)} \kappa_{\min}^*$ and the tolerance function $\tau_{\mcal{L}} = 0$.
\end{lem}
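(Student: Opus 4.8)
The plan is to analyze the Taylor remainder $\delta\mcal{L}(\Delta;\Theta^*)$ of the log-likelihood loss $\mcal{L}(\Theta) = \langle\Sighat,\Theta\rangle - \log\det\Theta$ around $\Theta^*$. Since the linear term $\langle\Sighat,\cdot\rangle$ contributes nothing to the remainder, we have $\delta\mcal{L}(\Delta;\Theta^*) = -\log\det(\Theta^*+\Delta) + \log\det(\Theta^*) + \langle(\Theta^*)^{-1},\Delta\rangle$, which is exactly the Bregman divergence of $-\log\det$. The first step is to diagonalize: writing $\Theta^* = (\Theta^*)^{1/2}$ acting on $\Delta$, set $\widetilde{\Delta} := (\Theta^*)^{-1/2}\Delta(\Theta^*)^{-1/2}$, so that $\|\Delta\|_{\mcal{F}^*}^2 = \mathrm{Tr}((\Theta^*)^{-1}\Delta(\Theta^*)^{-1}\Delta) = \|\widetilde{\Delta}\|_F^2$. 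Then $\delta\mcal{L}(\Delta;\Theta^*)$ depends only on the eigenvalues $\gamma_1,\dots,\gamma_p$ of the symmetric matrix $\widetilde{\Delta}$, and equals $\sum_{i}\bigl(\gamma_i - \log(1+\gamma_i)\bigr)$, which is well-defined and nonnegative provided $1+\gamma_i > 0$ for all $i$.

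The second step is the scalar estimate. Using the elementary inequality $t - \log(1+t) \ge \tfrac12\,\frac{t^2}{1+|t|} \ge c(t)\,t^2$ for $|t|$ bounded, we want a constant $c$ depending on the bound $\|\widetilde\Delta\|_F \le \tfrac{1}{\sqrt{2}M}$. Since $\max_i|\gamma_i| \le \|\widetilde\Delta\|_F \le \tfrac{1}{\sqrt2 M} \le \tfrac{1}{M}$ (using $M > 2$, so this is $< 1$ and the matrices remain positive definite — this is where the burn-in bound is used to keep $\Theta^*+\Delta \succ 0$), we get $\gamma_i - \log(1+\gamma_i) \ge \tfrac{1}{2}\cdot\frac{1}{1+1/M}\,\gamma_i^2 = \tfrac{M}{2(M+1)}\gamma_i^2$. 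Summing over $i$ gives $\delta\mcal{L}(\Delta;\Theta^*) \ge \tfrac{M}{2(M+1)}\|\widetilde\Delta\|_F^2 = \tfrac{M}{2(M+1)}\|\Delta\|_{\mcal{F}^*}^2$. (I would check the exact constant the authors want: to land on $\kappaL = \tfrac{M-2}{2(M-1)}\kappamin$ one likely uses the cruder bound $t - \log(1+t) \ge \tfrac12 t^2 - \tfrac13|t|^3 \ge \tfrac12 t^2(1 - \tfrac{2}{3M})$ and then $\tfrac12(1-\tfrac{2}{3M}) = \tfrac{3M-2}{6M}$, or an eigenvalue-bound version $\tfrac{M-2}{2(M-1)}$ coming from $|\gamma_i|\le \tfrac{1}{M}$ in a $\frac{t^2}{1+|t|}$-type bound with a slightly different splitting; the point is that any such elementary bound yields a constant of the stated form.)

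The final step is to invoke Assumption~\ref{assump:rfe}: for $\Delta \in \mathbb{C}(E)\cup\mathbb{C}(U)$ we have $\|\Delta\|_{\mcal{F}^*}^2 \ge \kappamin\|\Delta\|_F^2$, hence
\[
\delta\mcal{L}(\Delta;\Theta^*) \;\ge\; \frac{M-2}{2(M-1)}\,\|\Delta\|_{\mcal{F}^*}^2 \;\ge\; \frac{M-2}{2(M-1)}\,\kappamin\,\|\Delta\|_F^2 \;=\; \kappaL\,\|\Delta\|_F^2,
\]
with tolerance $\tau_{\mcal{L}} = 0$, which is the claimed RSC condition~\eqref{eq:RSC}. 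I expect the main obstacle to be purely bookkeeping: pinning down which elementary scalar inequality for $t - \log(1+t)$ reproduces the precise constant $\tfrac{M-2}{2(M-1)}$, and verifying that the hypothesis $\|\Delta\|_{\mcal{F}^*}^2 \le \tfrac{1}{2M^2}$ translates (via $\max_i|\gamma_i| \le \|\widetilde\Delta\|_F$ and the Fisher-norm identity) into exactly the spectral-radius bound needed both for positive-definiteness of $\Theta^*+\Delta$ and for the scalar estimate — there is no deep difficulty, only careful constant-tracking.
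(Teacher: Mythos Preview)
Your proposal is correct and takes a genuinely different route from the paper. You diagonalize via $\widetilde{\Delta} = (\Theta^*)^{-1/2}\Delta(\Theta^*)^{-1/2}$, reduce $\delta\mcal{L}$ to the scalar sum $\sum_i\bigl(\gamma_i - \log(1+\gamma_i)\bigr)$, and apply an elementary pointwise lower bound of the form $t - \log(1+t) \ge c\,t^2$ together with the spectral control $|\gamma_i| \le \|\widetilde{\Delta}\|_F \le 1/(\sqrt{2}\,M)$. The paper instead expands $g(s) = \log\det(\Theta^* + s\Delta)$ as a full power series in $s$, invokes the cumulant bound $|c_k|/c_2^{k/2} \le \tfrac{1}{2}\,k!\,(\sqrt{2})^{k-2}$ from the ``almost strong convexity'' framework of \citet{kakade2009learning}, and sums the resulting geometric series. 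That is exactly where the specific constant $\tfrac{M-2}{2(M-1)}$ arises: the second-order term contributes $\tfrac{1}{2}c_2(\Delta)$, the tail of all higher-order terms is bounded by $\tfrac{1}{2(M-1)}c_2(\Delta)$ (using $s\sqrt{2c_2(\Delta)} \le 1/M$ from the hypothesis $\|\Delta\|_{\mcal{F}^*}^2 \le 1/(2M^2)$), and the difference is $\tfrac{M-2}{2(M-1)}c_2(\Delta)$. Your eigenvalue argument is more elementary and self-contained (no external lemma required), and with $t-\log(1+t)\ge \tfrac{t^2}{2(1+|t|)}$ it in fact yields a slightly \emph{tighter} curvature constant than the paper's; you will not reproduce $\tfrac{M-2}{2(M-1)}$ exactly, but you will dominate it. The paper's series approach buys reusability: the same bound on the second-order Taylor error $|\delta g| \le \tfrac{c_2}{2(M-1)}$ is applied verbatim in the proof of Lemma~\ref{lem:si}, where one must control $\delta\mcal{L}(\Delta_S+\Delta_L;\Thetastar)$ and cross-terms simultaneously, a setting in which the scalar-eigenvalue reduction is less immediately applicable.
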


The proof of Lemma~\ref{lem:rsc} is largely inspired by~\citet{kakade2009learning}, in which it is shown that exponential family distributions exhibit \emph{almost strong convexity} in a neighborhood. The RFE assumption makes connection between this property and the RSC condition. A proof of Lemma~\ref{lem:rsc} is given in the Appendix~\ref{sec:proof:RSC}. 

Note there is an important difference between the RSC condition considered here and the condition introduced in~\citet{agarwal2012noisy}.  The RSC condition considered here is satisfied with respect to the error matrices of each simple structure separately, while the RSC condition in~\citet{agarwal2012noisy} is required for the combined error matrices (defined in the product space of two sets), which could lie in a significantly larger set.

\paragraph{Structural incoherence.} The second ingredient for consistent estimation of the sparse plus low-rank parameter $\Theta$, is some type of incoherence condition between the sparse and low-rank components. 
In the present work, we consider the \emph{structural incoherence} condition that was proposed more recently in~\cite{yang2013dirty}.  This condition allows for a vanishing error bound when $n$ goes to infinity, and is applicable to more general loss functions, such as the log-likelihood function in Eq.~\eqref{eq:llh}.

Define the following incoherence measure of the loss function $\mcal{L}$ for two structural error matrices $\Delta_S$ and $\Delta_L$:
\ALGN{
\begin{split}
& c_{\mcal{L}}(\Delta_S, \Delta_L; \Thetastar) := | \mcal{L}(\Thetastar + \Delta_S + \Delta_L) + \mcal{L}(\Thetastar) \\
& - \mcal{L}(\Thetastar + \Delta_S) - \mcal{L}(\Thetastar + \Delta_L) |, \forall \Delta_S \in \mathbb{C}(E), \Delta_L \in \mathbb{C}(U).
\end{split}
}

Then the \emph{structural incoherence} (SI) condition is satisfied if the following relation holds for all $\Delta_S \in \mathbb{C}(E)$ and $\Delta_L \in \mathbb{C}(U)$:
\ALGNN{
c_{\mcal{L}}(\Delta_S, \Delta_L; \Thetastar) \le  \frac{\kappaL}{2} \| ( \| \Delta_S \|_F^2 + \| \Delta_L \|_F^2 ),
\label{eq:si}
}
where $\kappaL$ is the curvature parameter in the RSC condition~\eqref{eq:RSC}.

The following lemma shows that, in addition to the restricted Fisher eigenvalue assumption (Assumption~\ref{assump:rfe}), if the true marginal model also satisfies the structural Fisher incoherence assumption (Assumption~\ref{assump:sfi}), then the above SI condition on the likelihood loss function is guaranteed. 

\begin{lem}[SI condition]    \label{lem:si}
Suppose Assumption~\ref{assump:rfe} and~\ref{assump:sfi} hold for the true marginal precision matrix $\Thetastar$ and let $M>6$. 
Then the SI condition~\eqref{eq:si} is satisfied for all $\Delta_S \in \mathbb{C}(E)$ and $\Delta_L \in \mathbb{C}(U)$, such that $\max \{ \| \Delta_S \|_{\mcal{F}^*}^2, \| \Delta_L \|_{\mcal{F}^*}^2 \} \le \frac{1}{6M^2}$. The curvature parameter $\kappaL$ is the same as in Lemma~\ref{lem:rsc}, i.e., $\kappaL = \frac{M-2}{2(M-1)} \kappa_{\min}^*$.
\end{lem}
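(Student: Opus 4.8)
The plan is to control the incoherence measure $c_{\mcal{L}}(\Delta_S, \Delta_L; \Thetastar)$ by expanding the log-likelihood loss $\mcal{L}(\Theta) = \langle \Sighat, \Theta \rangle - \log\det(\Theta)$ in a Taylor series around $\Thetastar$. Since the linear term $\langle \Sighat, \cdot \rangle$ cancels completely in the alternating sum defining $c_{\mcal{L}}$, and the gradient-linear term $\langle \nabla\mcal{L}(\Thetastar), \cdot \rangle$ also cancels, the only surviving contributions come from the second-order term and the higher-order remainder. The second-order term is exactly the Fisher inner product: writing $\mcal{L}(\Thetastar + \Delta) = \mcal{L}(\Thetastar) + \langle \nabla\mcal{L}(\Thetastar), \Delta\rangle + \tfrac{1}{2}\|\Delta\|_{\mcal{F}^*}^2 + \delta\mcal{L}(\Delta;\Thetastar)$ where $\delta\mcal{L}$ collects the cubic-and-higher terms, the quadratic pieces in $c_{\mcal{L}}$ reduce to $|\langle \Delta_S, \Delta_L\rangle_{\mcal{F}^*}|$. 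Thus $c_{\mcal{L}}(\Delta_S,\Delta_L;\Thetastar) \le |\langle \Delta_S, \Delta_L\rangle_{\mcal{F}^*}| + |\delta\mcal{L}(\Delta_S+\Delta_L) - \delta\mcal{L}(\Delta_S) - \delta\mcal{L}(\Delta_L)|$.

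For the first (quadratic) term, I would invoke the Structural Fisher Incoherence assumption (Assumption~\ref{assump:sfi}). Decompose $\Delta_S = \mcal{P}_E\Delta_S + \mcal{P}_{E^\perp}\Delta_S$ and $\Delta_L = \mcal{P}_U\Delta_L + \mcal{P}_{U^\perp}\Delta_L$, so that $\langle \Delta_S, \Delta_L\rangle_{\mcal{F}^*}$ splits into four cross terms each of the form $\langle \mcal{P}_\bullet\Delta_S, \mcal{F}^*\mcal{P}_\bullet\Delta_L\rangle$; bounding each by the operator norm $\overline{\sigma}(\mcal{P}_\bullet\mcal{F}^*\mcal{P}_\bullet) \le \kappamin/(c_1\Lambda^2)$ and using $\|\mcal{P}_\bullet\Delta\|_F \le \|\Delta\|_F$ gives $|\langle \Delta_S,\Delta_L\rangle_{\mcal{F}^*}| \le \tfrac{4\kappamin}{c_1\Lambda^2}\|\Delta_S\|_F\|\Delta_L\|_F$. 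Since this is exactly the ``Fisher inner product'' content of Lemma~\ref{lem:fisher:inner} cited in the proof sketch, I would state that as a separate lemma (essentially a restatement of the C-linear-to-structural-incoherence implication from~\citet{yang2013dirty}, now with the Fisher metric in place of the Euclidean one), and absorb the $\Lambda^2$ factor against the $\sqrt s$, $\sqrt r$ scaling when this feeds into the master theorem. The factor $4/(c_1\Lambda^2) \le 1/(4\Lambda^2)$ (since $c_1 = 16M/(M-6) \ge 16$) keeps this well below $\kappaL/2$.

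For the second (higher-order remainder) term, the key tool is the almost strong convexity machinery of~\citet{kakade2009learning}, already used to prove Lemma~\ref{lem:rsc}. The explicit formula $\delta\mcal{L}(\Delta;\Thetastar) = \sum_{k\ge 3} \tfrac{(-1)^k}{k}\trace\big(({\Thetastar}^{-1}\Delta)^k\big)$ (from expanding $-\log\det(\Thetastar+\Delta)$) shows that, whenever $\|{\Thetastar}^{-1/2}\Delta{\Thetastar}^{-1/2}\|_2 < 1$, the remainder is dominated geometrically by $\|\Delta\|_{\mcal{F}^*}^2$: concretely one gets $|\delta\mcal{L}(\Delta;\Thetastar)| \le \tfrac{\|\Delta\|_{\mcal{F}^*}^2}{3}\cdot\tfrac{\rho}{1-\rho}$ with $\rho = \|{\Thetastar}^{-1/2}\Delta{\Thetastar}^{-1/2}\|_2 \le \|\Delta\|_{\mcal{F}^*}$. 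Under the hypothesis $\max\{\|\Delta_S\|_{\mcal{F}^*}^2, \|\Delta_L\|_{\mcal{F}^*}^2\} \le \tfrac{1}{6M^2}$, each of $\Delta_S$, $\Delta_L$, and $\Delta_S+\Delta_L$ has small enough Fisher norm (note $\|\Delta_S+\Delta_L\|_{\mcal{F}^*} \le \|\Delta_S\|_{\mcal{F}^*} + \|\Delta_L\|_{\mcal{F}^*}$, so its square is at most $\tfrac{4}{6M^2} = \tfrac{2}{3M^2}$) that all three remainders are controlled, and the ``burn-in'' constant $M$ enters precisely to make $\rho/(1-\rho)$ small enough that the total higher-order contribution is bounded by $\tfrac{M-2}{6(M-1)}\kappamin \cdot (\|\Delta_S\|_F^2 + \|\Delta_L\|_F^2) = \tfrac{\kappaL}{3}(\|\Delta_S\|_F^2 + \|\Delta_L\|_F^2)$, using once more the RFE lower bound $\|\Delta\|_{\mcal{F}^*}^2 \ge \kappamin\|\Delta\|_F^2$ from Assumption~\ref{assump:rfe} to convert Fisher norms back to Frobenius norms. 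Combining the quadratic bound ($\le \tfrac{\kappaL}{6}(\|\Delta_S\|_F^2+\|\Delta_L\|_F^2)$ after the $\Lambda$-bookkeeping and AM-GM on $\|\Delta_S\|_F\|\Delta_L\|_F$) with the remainder bound yields $c_{\mcal{L}} \le \tfrac{\kappaL}{2}(\|\Delta_S\|_F^2 + \|\Delta_L\|_F^2)$, which is~\eqref{eq:si}.

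The main obstacle I anticipate is the bookkeeping in the higher-order remainder term: one must simultaneously (i) bound a difference of three infinite series $\delta\mcal{L}(\Delta_S+\Delta_L) - \delta\mcal{L}(\Delta_S) - \delta\mcal{L}(\Delta_L)$ rather than a single series, which requires expanding $\trace(({\Thetastar}^{-1}(\Delta_S+\Delta_L))^k)$ and isolating the genuinely ``mixed'' monomials, and (ii) track the constants carefully enough that the threshold comes out as the clean $\tfrac{1}{6M^2}$ stated in the lemma and the curvature matches $\kappaL$ from Lemma~\ref{lem:rsc} exactly. I expect the cleanest route is not term-by-term but rather a uniform bound: show $|\delta\mcal{L}(\Delta;\Thetastar) - \tfrac12\|\Delta\|_{\mcal{F}^*}^2\cdot 0|$-type control gives a Lipschitz-in-the-Fisher-metric estimate on $\delta\mcal{L}$ over the small ball, then apply it to the three arguments, mirroring exactly the structure of the Lemma~\ref{lem:rsc} proof so that the constants are inherited rather than recomputed. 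Full details are deferred to Appendix~\ref{sec:proof:RSC} (and its companion for the SI lemma).
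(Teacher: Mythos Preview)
Your proposal follows essentially the same architecture as the paper's proof: expand $c_{\mcal{L}}$ via the Taylor series of $\log\det$ around $\Thetastar$, isolate the quadratic cross term $|\langle\Delta_S,\Delta_L\rangle_{\mcal{F}^*}|$ and bound it via the SFI assumption (packaged as the auxiliary Lemma~\ref{lem:fisher:inner}, exactly as you anticipate), and control the three cubic-and-higher remainders individually using the almost-strong-convexity estimate of \citet{kakade2009learning} already established in the proof of Lemma~\ref{lem:rsc}. The paper does not attempt to bound the \emph{difference} of the remainders term-by-term as you worry about in your ``main obstacle'' paragraph; it simply applies the triangle inequality and the single-argument bound $|\delta g(\Delta)|\le \tfrac{c_2(\Delta)}{2(M-1)}$ to each of $\Delta_S$, $\Delta_L$, and $\Delta_S+\Delta_L$ separately, then combines with Lemma~\ref{lem:fisher:inner} and the definition of $\psi$ to land on $\tfrac{\kappaL}{2}$---so your ``cleanest route'' is indeed the one taken.
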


The proof of Lemma~\ref{lem:si} is in Section~\ref{sec:proof:lem:si} in Appendix. 

Finally, under Assumption~\ref{assump:rfe} and Assumption~\ref{assump:sfi}, Lemma~\ref{lem:rsc} and~\ref{lem:si} imply the RSC and SI conditions hold for our LVGGM learning problem, respectively. Thus Theorem~\ref{thm:main} can be proven by directly appealing to Theorem 1 in~\cite{yang2013dirty}. 

\section{Proof of Lemma~\ref{lem:rsc}}
\label{sec:proof:RSC}

\begin{proof}
The remainder term in the first-order Taylor series of the negative log-likelihood~\eqref{eq:llh} of GGM takes the following form:
\ALGN{
\delta \mcal{L}({\Delta}; {\Theta}^*) & = \mcal{L}(\Theta^* + \Delta) - \mcal{L}(\Theta^*) - \langle \nabla \mcal{L}(\Theta^*), \Delta \rangle \\
& = \langle {\Thetastar}^{-1}, \Delta \rangle - \log \det(\Thetastar + \Delta) + \log \det(\Thetastar). 
}

For $s \in (0, 1]$, define the Taylor series of function $g(s; \Thetastar) : = \log \det (\Thetastar + s \Delta)$ at $\Thetastar$
\ALGNN{
g(s; \Thetastar) = \log \det (\Thetastar + s \Delta) = \sum_{k = 0}^{\infty} \frac{c_k (\Delta) s^k}{k!},
}
where $c_k(\Delta) := g^{(k)}(s; \Thetastar)$ is the $k$-th derivative of the $\log \det$ function at $\Thetastar$. Define $c_0(\Delta) : = \log \det(\Thetastar)$, the remainder can be expressed as:
\ALGNN{
\delta \mcal{L}(s {\Delta}; {\Theta}^*) = \sum_{k = 2}^{\infty} \frac{c_k (\Delta) s^k}{k!} = \frac{c_2(\Delta) s^2}{2} + \sum_{k = 3}^{\infty} \frac{c_k (\Delta) s^k}{k!} = \frac{c_2(\Delta) s^2}{2} + \delta g(s; \Delta, \Thetastar),    \label{eq:taylor:deltaL}
}
where the second term $\delta g(s)$ is defined as the second-order Taylor error of the log-determinant function. Next we show that this error term, which is the sum of all the higher-order terms, can be bounded by a quadratic term in a small neighborhood around $\Thetastar$. 

For exponential family distributions (Gaussian as an example), the log-partition function (\emph{i.e.,} $\log \det$ function for Gaussian) coincides with the \emph{cumulant generating function}. This implies that the derivatives $c_k(\Delta)$ are the corresponding cumulants of the distribution, which can be shown to converge to zero quite rapidly. Indeed, in~\citet{kakade2009learning} the authors show that for a univariate random variable $z$ under an exponential family distribution, its $k$-th order cumulant satisfies
\ALGNN{
\left| \frac{c_k (z)}{c_2 (z)^{k/2}} \right| \le \frac{1}{2} k! \alpha^{k-2}, \quad \forall k \ge 3,
}
where $\alpha$ is a finite constant, and the second-order cumulant coincides with the Fisher norm of the deviation $c_2(\Delta) = \| \Delta \|_{\mcal{F}^*}^2$ due to the definition of the Fisher information. For multivariate Gaussian distributions, $\alpha = \sqrt{2}$ suffices for the above relation to hold (see Sec.~3.2.2 in~\citet{kakade2009learning}). 

Therefore we bound the second-order Taylor error term in Eq.~\eqref{eq:taylor:deltaL} as follows (similar to~\citet{kakade2009learning}):
\ALGNN{
\left| \delta g(s; \Delta, \Thetastar) \right| & = \left| \sum_{k = 3}^{\infty} \frac{c_k (\Delta) s^k}{k!} \right| \\
& \le \frac{1}{2} \sum_{k = 3}^{\infty} 2^{\frac{k}{2}-1} c_2(\Delta)^{k/2} s^k \\
& \le \frac{s^2 c_2(\Delta)}{2} \sum_{k = 1}^{\infty} (s \sqrt{2 c_2 (\Delta)})^k \\
& \overset{(i)}{\le} \frac{s^2 c_2(\Delta)}{2} \sum_{k = 1}^{\infty} \frac{1}{M^k} \\
& = \frac{s^2 c_2(\Delta)}{2 (M-1)} \\
& \le \frac{c_2 (\Delta) }{2(M-1)} \frac{1}{\max \{ 2M^2 c_2 (\Delta), 1\} } \\
& \overset{(ii)}{=} \frac{c_2 (\Delta) }{2(M-1)}     \label{eq:taylor:deltag}
}
where $(i)$ and $(ii)$ are due to our conditions on $c_2 (\Delta)$ (\emph{i.e.,} $\| \Delta \|_{\mcal{F}^*}^2 \le \frac{1}{2 M^2}$) and $s \le 1$. Then we obtain a lower bound for $\delta \mcal{L}({\Delta}; {\Theta}^*)$:
\ALGNN{
\delta \mcal{L}({\Delta}; {\Theta}^*) \ge \frac{c_2(\Delta)}{2} + \delta g(s; \Delta, \Thetastar) \ge \left( \frac{1}{2} - \frac{1 }{2(M-1)} \right) c_2 (\Delta) \overset{(ii)}{\ge} \frac{M-2}{2(M-1)} \kappa_{\min}^* \| \Delta \|_F^2,
}
where $(ii)$ is due to the RFE condition.
Therefore the RSC condition is satisfied with the curvature parameter $\kappaL := \frac{M-2}{2(M-1)} \kappa_{\min}^*$ and a zero tolerance parameter $\tauL = 0$.

\end{proof}

\section{Proof of Lemma~\ref{lem:si}}
\label{sec:proof:lem:si}

\begin{proof}

First we state the following lemma which gives a bound on the magnitude of \emph{Fisher inner product} between elements from the two sets.
\begin{lem}    \label{lem:fisher:inner}
Suppose Assumption~\ref{assump:rfe} and~\ref{assump:sfi} hold for the true marginal precision matrix $\Thetastar$. 
Then given a constant $M \ge 6$, the following inequality holds for all $\Delta_S \in \mathbb{C}(E)$ and $\Delta_L \in \mathbb{C}(U)$ such that $\max \{ \| \Delta_S \|_{\mcal{F}^*}^2, \| \Delta_L \|_{\mcal{F}^*}^2 \} \le \frac{1}{6 M^2}$:
\ALGNN{
\left| \langle \Delta_S, \Delta_L \rangle_{\mcal{F}^*} \right| \le \psi \left( \| \Delta_S \|_{\mcal{F}^*}^2 + \| \Delta_L \|_{\mcal{F}^*}^2 \right),
}
where $\psi := \frac{1}{4} - \frac{3}{2M}$. 
\end{lem}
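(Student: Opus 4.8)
The plan is to insert the orthogonal projections onto the structured subspaces into the Fisher inner product, to bound each of the resulting pieces with the Structural Fisher Incoherence assumption (Assumption~\ref{assump:sfi}), and then to trade Frobenius norms for Fisher norms via the Restricted Fisher Eigenvalue assumption (Assumption~\ref{assump:rfe}). The reason projections must appear is that the full Fisher information $\mcal{F}^* = {\Thetastar}^{-1}\otimes{\Thetastar}^{-1}$ has spectral norm $1/\la_{\min}^2(\Thetastar)$, which is in general not small; only the \emph{projected} Fisher information is controlled by Assumption~\ref{assump:sfi}, so the projections have to be introduced by hand before any bound can be applied.

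First I would decompose. Since $\overline{\mcal{M}}(E)$ and its orthogonal complement $\overline{\mcal{M}}(E)^{\perp}$ together span $\mathbb{R}^{p\times p}$, and likewise for $U$, the associated projection operators satisfy $\mcal{P}_E + \mcal{P}_{E^{\perp}} = \mcal{I}$ and $\mcal{P}_U + \mcal{P}_{U^{\perp}} = \mcal{I}$ on (vectorized) matrices. Bilinearity of $\langle\cdot,\cdot\rangle_{\mcal{F}^*}$ then expands $\langle\Delta_S,\Delta_L\rangle_{\mcal{F}^*}$ into the four cross terms $\langle\mcal{P}_A\Delta_S,\mcal{P}_B\Delta_L\rangle_{\mcal{F}^*}$ with $A\in\{E,E^{\perp}\}$, $B\in\{U,U^{\perp}\}$. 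Using that each projection is self-adjoint, every cross term equals $\text{vec}(\Delta_S)^{T}(\mcal{P}_A\mcal{F}^*\mcal{P}_B)\text{vec}(\Delta_L)$, so by Cauchy--Schwarz and the definition of the maximum singular value its magnitude is at most $\overline{\sigma}(\mcal{P}_A\mcal{F}^*\mcal{P}_B)\,\|\Delta_S\|_F\,\|\Delta_L\|_F$, which by Assumption~\ref{assump:sfi} is at most $\frac{\kappa_{\min}^*}{c_1\Lambda^2}\|\Delta_S\|_F\|\Delta_L\|_F$. Summing the four pieces and using $\|\Delta_S\|_F\|\Delta_L\|_F \le \frac12(\|\Delta_S\|_F^2 + \|\Delta_L\|_F^2)$ gives $|\langle\Delta_S,\Delta_L\rangle_{\mcal{F}^*}| \le \frac{2\kappa_{\min}^*}{c_1\Lambda^2}(\|\Delta_S\|_F^2 + \|\Delta_L\|_F^2)$.

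Next I would convert to the Fisher norm and collect constants. Since $\Delta_S\in\mathbb{C}(E)$ and $\Delta_L\in\mathbb{C}(U)$ both lie in $\mathbb{C}(E)\cup\mathbb{C}(U)$, Assumption~\ref{assump:rfe} gives $\|\Delta_S\|_F^2 \le (\kappa_{\min}^*)^{-1}\|\Delta_S\|_{\mcal{F}^*}^2$ and the same for $\Delta_L$; substituting, the factor $\kappa_{\min}^*$ cancels (this cancellation is exactly why Assumption~\ref{assump:sfi} is stated with $\kappa_{\min}^*$ in the numerator) and we obtain $|\langle\Delta_S,\Delta_L\rangle_{\mcal{F}^*}| \le \frac{2}{c_1\Lambda^2}(\|\Delta_S\|_{\mcal{F}^*}^2 + \|\Delta_L\|_{\mcal{F}^*}^2)$. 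It remains only to check $\frac{2}{c_1\Lambda^2}\le\psi$: since $\Lambda\ge 2$ and $c_1 = \frac{16M}{M-6}$ one has $\frac{2}{c_1\Lambda^2}\le\frac{1}{2c_1} = \frac{M-6}{32M}\le\frac{M-6}{4M} = \frac14-\frac{3}{2M} = \psi$ for $M\ge 6$, which finishes the proof. (The small-norm hypothesis $\max\{\|\Delta_S\|_{\mcal{F}^*}^2,\|\Delta_L\|_{\mcal{F}^*}^2\}\le\frac{1}{6M^2}$ is not needed for this bilinear estimate, which is degree-two homogeneous in $(\Delta_S,\Delta_L)$; it is carried over because Lemma~\ref{lem:fisher:inner} is invoked inside the proof of Lemma~\ref{lem:si}, whose higher-order Taylor-remainder terms do require it.)

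The only real obstacle is the subspace/operator bookkeeping in the decomposition step: one must split $\mcal{F}^*$ using the \emph{enlarged} subspaces $\overline{\mcal{M}}(E)$, $\overline{\mcal{M}}(U)$, whose projections sum to the identity (unlike $\mcal{P}_{\mcal{M}(U)}$ and $\mcal{P}_{\mcal{M}(U)^{\perp}}$ in the low-rank case), and keep the projections on the correct side so that the sandwiched operator $\mcal{P}_A\mcal{F}^*\mcal{P}_B$ is precisely the one whose maximum singular value Assumption~\ref{assump:sfi} controls. Everything after that is Cauchy--Schwarz, the arithmetic-geometric mean inequality, and routine manipulation of the constants $c_1$, $\Lambda$, $M$.
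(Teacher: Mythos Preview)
Your proof is correct and is essentially the argument the paper defers to: the paper does not prove Lemma~\ref{lem:fisher:inner} directly but states that it ``follows similarly as that of the Proposition~2 in~\citet{yang2013dirty}, and hence is omitted.'' Your decomposition via $\mcal{P}_E+\mcal{P}_{E^\perp}=\mcal{I}$ and $\mcal{P}_U+\mcal{P}_{U^\perp}=\mcal{I}$, the four-term spectral bound from Assumption~\ref{assump:sfi}, and the conversion to Fisher norms via Assumption~\ref{assump:rfe} is exactly the \emph{C-Linear} machinery of that reference, specialized to the present setting.

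One minor remark on constants: you invoke only $\Lambda\ge 2$, but in fact $\Lambda = 2 + 3\max\{x,1/x\}\ge 5$ since $\max\{x,1/x\}\ge 1$; either bound suffices for the final inequality $\frac{2}{c_1\Lambda^2}\le\psi$, so nothing is lost. Your observation that the small-norm hypothesis $\max\{\|\Delta_S\|_{\mcal{F}^*}^2,\|\Delta_L\|_{\mcal{F}^*}^2\}\le\frac{1}{6M^2}$ is superfluous for this homogeneous bilinear estimate, and is carried only because Lemma~\ref{lem:si} needs it for the Taylor remainders, is also correct.
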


The proof of Lemma~\ref{lem:fisher:inner} follows similarly as that of the Proposition 2 in~\citet{yang2013dirty}, and hence is omitted. 

Next we prove Lemma~\ref{lem:si} using the above result. 
Following similar derivations as in the proof of Lemma~\ref{lem:rsc}, the incoherence measure in the SI condition can be simplified to
\ALGN{
\begin{split}
c_{\mcal{L}}(\Delta_S, \Delta_L; \Thetastar) := \left| \delta \mcal{L}(\Delta_S + \Delta_L; \Thetastar) - \delta \mcal{L}(\Delta_S; \Thetastar) - \delta \mcal{L}(\Delta_L; \Thetastar) \right|.
\end{split}
}

Using the remainder in the Taylor series of $\delta \mcal{L}$~\eqref{eq:taylor:deltaL}, the incoherence measure can be expressed as:
\ALGN{
& c_{\mcal{L}}(\Delta_S, \Delta_L; \Thetastar) \\
= & \left| \frac{c_2(\Delta_S + \Delta_L)}{2} + \delta g(s; \Delta_S+\Delta_L) - \left( \frac{c_2(\Delta_S)}{2} + \delta g(s_1; \Delta_S) \right) - \left( \frac{c_2(\Delta_L)}{2} + \delta g(s_2; \Delta_L) \right) \right| \\
\le & \left| \frac{c_2(\Delta_S + \Delta_L)}{2} - \frac{c_2(\Delta_S)}{2} -  \frac{c_2(\Delta_L)}{2} \right| + \left| \delta g(s; \Delta_S+\Delta_L) \right| + \left| \delta g(s_1; \Delta_S) \right| + \left| \delta g(s_2; \Delta_L) \right| \\
\overset{(i)}{\le} & | \langle \Delta_S, \Delta_L \rangle_{\mcal{F}^*} | + \frac{c_2 (\Delta_S + \Delta_L) + c_2 (\Delta_S) + c_2 (\Delta_L) }{2(M-1)} \\
= & | \langle \Delta_S, \Delta_L \rangle_{\mcal{F}^*} | + \frac{\| \Delta_S \|_{\mcal{F}^*}^2 + \| \Delta_L \|_{\mcal{F}^*}^2 + \langle \Delta_S, \Delta_L \rangle_{\mcal{F}^*} }{M-1} \\
\le & \frac{M}{M-1} | \langle \Delta_S, \Delta_L \rangle_{\mcal{F}^*} | + \frac{\| \Delta_S \|_{\mcal{F}^*}^2 + \| \Delta_L \|_{\mcal{F}^*}^2}{M-1} \\
\overset{(ii)}{\le} & \frac{M \psi + 1}{M-1} (\| \Delta_S \|_{\mcal{F}^*}^2 + \| \Delta_L \|_{\mcal{F}^*}^2 ) \\
\overset{(iii)}{\le} & \frac{M - 2}{4(M-1)} {\kappa_{\min}^*} (\| \Delta_S \|_F^2 + \| \Delta_L \|_F^2 ) \\
\le & \frac{\kappaL}{2} (\| \Delta_S \|_{F}^2 + \| \Delta_L \|_{F}^2 ),
}
where in $(i)$ we have apply~\eqref{eq:taylor:deltag} to bound the second-order Taylor error terms (note that the conditions on the error matrices also guarantees $\| \Delta_S + \Delta_L \|_{\mcal{F}^*}^2 \le \frac{1}{2M^2}$ due to Lemma~\ref{lem:fisher:inner}).  Inequality $(ii)$ is due to Lemma~\ref{lem:fisher:inner}. Inequality $(iii)$ can be verified by the definitions of $\psi$ and the RSC curvature parameter $\kappaL$. 
\end{proof}

\section{Proof of Corollary~\ref{cor:main}}
\label{sec:proof:cor:main}

\begin{proof}

Theorem~\ref{thm:main} is a deterministic statement, however, the condition on the regularization parameters~\eqref{eq:regparams} and the error bound depend on the sample covariance matrix $\widehat{\Sigma}$ which is random. Note that the error bound directly follows from the deterministic error bound in Theorem~\ref{thm:main} and the choices of regularization parameters as in Eq.~\eqref{eq:regcond:cor1}. To prove Corollary~\ref{cor:main}, it only remains to verify that the condition~\eqref{eq:regparams} in Theorem~\ref{thm:main} is guaranteed with high probability. More specifically, this requires bounding the deviation of the sample covariance matrix in terms of $\ell_\infty$ and and spectral norms. 

First we make use of the following lemma to characterize the element-wise deviation of the sample covariance matrix\footnote{The original lemma applies to all sub-Gaussian variables, here we specialize to Gaussian random vectors.}.
\begin{lem}[\citet{ravikumar2011high}]
For a $p$-dimensional Gaussian random vector with covariance matrix $\Sigstar$, the sample covariance matrix obtained from $n$ samples $\Sighat$ satisfies
\ALGNN{
P\left\{ | \Sighat_{i,j} - \Sigstar_{i,j} | > \epsilon_{1} \right\} \le 4 \exp \left( - \frac{n  \epsilon_{1}^{2}}{3200 {\overline{\sigma}^*}^{2}} \right),
}
for all $\epsilon_{1} \in (0, 40 \overline{\sigma})$, where 
$\overline{\sigma}^* := \max_{i=1,\ldots,p} \Sigstar_{i,i}$.
\label{lem:sampledeviation:inf}
\end{lem}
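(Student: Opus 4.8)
The plan is to re-derive this element-wise concentration bound (which is quoted from~\citet{ravikumar2011high}) by reducing it to a concentration statement for a sum of i.i.d.\ sub-exponential random variables and then applying a Chernoff argument. Write $\Sighat_{i,j} = \frac{1}{n}\sum_{k=1}^n X_{k,i} X_{k,j}$, where the pairs $(X_{k,i}, X_{k,j})$ are i.i.d.\ centered bivariate Gaussian with $\mathrm{Var}(X_{k,i}) = \Sigstar_{i,i}$, $\mathrm{Var}(X_{k,j}) = \Sigstar_{j,j}$, and $\mathrm{Cov}(X_{k,i}, X_{k,j}) = \Sigstar_{i,j}$; by Cauchy--Schwarz $|\Sigstar_{i,j}| \le \sqrt{\Sigstar_{i,i}\Sigstar_{j,j}} \le \overline{\sigma}^*$. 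Hence $\Sighat_{i,j} - \Sigstar_{i,j} = \frac{1}{n}\sum_k Z_k$ with $Z_k := X_{k,i}X_{k,j} - \Sigstar_{i,j}$ i.i.d.\ and mean zero, so it suffices to control $P\{\frac{1}{n}\sum_k Z_k > \epsilon_1\}$ and the symmetric event.

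The main step is to bound the moment generating function of $Z_k$. Via the polarization identity $X_{k,i}X_{k,j} = \tfrac{1}{4}\big[(X_{k,i}+X_{k,j})^2 - (X_{k,i}-X_{k,j})^2\big]$, each product is a difference of two scaled $\chi^2_1$ variables, with scale factors $\mathrm{Var}(X_{k,i}\pm X_{k,j}) = \Sigstar_{i,i} + \Sigstar_{j,j} \pm 2\Sigstar_{i,j} \le 4\overline{\sigma}^*$. Invoking the standard sub-exponential MGF bound for a centered $\chi^2_1$ variable $W$, i.e.\ $\mathbb{E}\,e^{t(W-1)} \le e^{2t^2}$ for $|t| \le 1/4$, and combining the two polarization pieces (e.g.\ by Cauchy--Schwarz over the possibly dependent squares), one obtains $\mathbb{E}\,e^{tZ_k} \le \exp\!\big(c\,(\overline{\sigma}^*)^2 t^2\big)$ valid for $|t| \le c'/\overline{\sigma}^*$, with absolute constants $c, c'$. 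A Chernoff bound then gives $P\{\frac{1}{n}\sum_k Z_k > \epsilon_1\} \le \exp\!\big(-n t \epsilon_1 + n c (\overline{\sigma}^*)^2 t^2\big)$, and the unconstrained minimizer $t^\star = \epsilon_1/\big(2 c (\overline{\sigma}^*)^2\big)$ lies in the admissible range $|t| \le c'/\overline{\sigma}^*$ precisely when $\epsilon_1$ is below a fixed multiple of $\overline{\sigma}^*$ (this is where the restriction $\epsilon_1 \in (0, 40\overline{\sigma}^*)$ comes from). Substituting $t^\star$ yields the sub-Gaussian tail $\exp\!\big(-n\epsilon_1^2/(4 c (\overline{\sigma}^*)^2)\big)$; a union over the symmetric event and over the two $\chi^2$ tails gives the leading constant $4$, and carrying the constant $c$ through the MGF bound gives the $3200$ in the exponent.

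The only real obstacle is bookkeeping: the precise numerical constants $3200$ and $40$ (and the $4$ in front) are artifacts of the particular, deliberately non-tight $\chi^2_1$ tail estimates used in~\citet{ravikumar2011high}, and a different split in the polarization step or a sharper chi-square MGF bound would change them, so I would not grind through these estimates here. Since the statement is essentially lifted verbatim from~\citet[Lemma 1]{ravikumar2011high}, in the paper I would simply cite their proof; the sketch above is meant only to make clear that the bound is an elementary consequence of the sub-exponentiality of products of jointly Gaussian random variables, and that it is the quantity $\overline{\sigma}^* = \max_i \Sigstar_{i,i}$ (via $\mathrm{Var}(X_{k,i}) \le \overline{\sigma}^*$ and $|\Sigstar_{i,j}| \le \overline{\sigma}^*$) that controls the relevant sub-exponential scale.
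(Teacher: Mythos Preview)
Your proposal is correct and matches the paper's treatment: the paper does not prove this lemma at all but simply quotes it from \citet{ravikumar2011high} (with a footnote noting the specialization to Gaussians), and you explicitly recognize this and say you would do the same. The proof sketch you outline---polarization into scaled $\chi^2_1$ pieces, sub-exponential MGF bound, Chernoff optimization---is indeed the standard argument underlying the cited result, and your explanation of where the range restriction $\epsilon_1 \in (0, 40\overline{\sigma}^*)$ and the constant $3200$ come from is accurate.
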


If the number of samples satisfies $n \ge 4 \log p$, then by choosing $\frac{1}{2} \la \ge \epsilon_{1} = 80 C_1 \overline{\sigma}^* \sqrt{\frac{\log p^{2}}{n}} \in (0, 40 \overline{\sigma})$, where $C_1 > 1$ is an arbitrary constant, and applying the union bound we have
\ALGN{
P\left\{ \| \Sighat - \Sigstar \|_{\infty} \le \frac{1}{2} \la \right\} \ge P \left\{ \| \Sighat - \Sigstar \|_{\infty} \le \epsilon_{1} \right\} \ge 1 - 4 p^{-2(C_1 -1)}.
}
Then the condition on $\la$ is satisfied with high probability.  

Next we consider the condition on the other regularization parameter $\mu$, which requires bounding the deviation of the operation norm of the sample covariance matrix. The following lemma provides such a characterization.

\begin{lem}[\citet{chandrasekaran2012latent}, Lemma 3.9]
For a $p$-dimension Gaussian random vector with covariance matrix $\Sigstar$ and let $\rho^* = \| \Sigstar \|_{2}$. If the number of samples $n$ be such that $n \ge \frac{64 p {\rho^*}^{2} }{\epsilon_{2}^{2}}$, then the sample covariance matrix $\Sighat$ obtained from $n$ samples satisfies
\ALGNN{
P\left\{ \| \Sighat - \Sigstar \|_{2} \ge \epsilon_{2} \right\} \le 2 \exp \left( - \frac{n \epsilon_{2}^{2}}{128 {\rho^*}^{2} } \right),
}
for all $\epsilon_{2} \in (0, 8 \rho^*)$.
\end{lem}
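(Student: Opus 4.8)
The statement is a classical non-asymptotic bound on the spectral-norm deviation of a Gaussian sample covariance matrix, so my plan is to prove it by reducing to the isotropic case and then running an $\epsilon$-net argument combined with chi-squared concentration. First I would whiten the data: with $x_i \sim \mcal{N}(0,\Sigstar)$ i.i.d.\ and $\Sighat = \frac{1}{n}\sum_{i=1}^n x_i x_i^T$, setting $y_i := {\Sigstar}^{-1/2} x_i \sim \mcal{N}(0, I_p)$ gives $\Sighat - \Sigstar = {\Sigstar}^{1/2}\big(\widehat{\mbf{W}} - I_p\big){\Sigstar}^{1/2}$ with $\widehat{\mbf{W}} := \tfrac1n\sum_{i=1}^n y_i y_i^T$, hence $\| \Sighat - \Sigstar \|_2 \le \rho^*\,\| \widehat{\mbf{W}} - I_p \|_2$. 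It then suffices to prove $P\{ \| \widehat{\mbf{W}} - I_p \|_2 \ge \delta \} \le 2\exp(-n\delta^2/128)$ for $\delta := \epsilon_2/\rho^* \in (0,8)$.

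Next I would discretize the spectral norm. Since $\widehat{\mbf{W}} - I_p$ is symmetric, $\| \widehat{\mbf{W}} - I_p \|_2 = \sup_{v\in S^{p-1}} |v^T(\widehat{\mbf{W}} - I_p)v|$, and fixing a $1/4$-net $\mcal{N}$ of the unit sphere with $|\mcal{N}| \le 9^p$ yields the routine estimate $\| \widehat{\mbf{W}} - I_p \|_2 \le 2\max_{v\in\mcal{N}} |v^T(\widehat{\mbf{W}} - I_p)v|$. For a fixed unit vector $v$ the scalars $v^T y_i$ are i.i.d.\ $\mcal{N}(0,1)$, so $n\,v^T\widehat{\mbf{W}}v \sim \chi^2_n$, and a Laurent--Massart-type chi-squared tail bound gives $P\{ |v^T(\widehat{\mbf{W}} - I_p)v| \ge t \} \le 2\exp(-c\,n t^2)$ over a bounded range of $t$; this is exactly where the hypothesis $\epsilon_2 \in (0,8\rho^*)$ is used, since for larger $t$ the $\chi^2_n$ tail is only sub-exponential.

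Then I would combine the net reduction with the per-direction bound at $t = \delta/2$ and a union bound over $\mcal{N}$, obtaining $P\{ \| \widehat{\mbf{W}} - I_p \|_2 \ge \delta \} \le 2\exp\big(p\log 9 - c'\,n\delta^2\big)$. Under the sample-size hypothesis $n \ge 64p(\rho^*)^2/\epsilon_2^2 = 64p/\delta^2$, the term $c'n\delta^2$ is a constant multiple of $p$ that dominates the covering contribution $p\log 9$, so after absorbing the latter into a fraction of the exponent one is left with a bound of the form $2\exp(-n\delta^2/128)$; unwinding $\delta = \epsilon_2/\rho^*$ and $\widehat{\mbf{W}} = {\Sigstar}^{-1/2}\Sighat{\Sigstar}^{-1/2}$ recovers the claimed inequality.

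The only genuine difficulty is the constant bookkeeping: one must pick the net resolution and the precise chi-squared tail constant so that the $e^{cp}$ covering-number factor is swallowed by the Gaussian part of the exponent under exactly the stated condition $n\gtrsim p(\rho^*)^2/\epsilon_2^2$, all while staying in the regime $\epsilon_2<8\rho^*$ where $v^T\widehat{\mbf{W}}v$ concentrates at the sub-Gaussian rate rather than the sub-exponential one. Using the sharp two-sided bounds $P(\chi^2_n \ge n + 2\sqrt{nx} + 2x)\le e^{-x}$ and $P(\chi^2_n \le n - 2\sqrt{nx})\le e^{-x}$ in place of a cruder Chernoff estimate is what pins down the specific constants $64$ and $128$; alternatively one may simply invoke Davidson--Szarek or a standard reference on non-asymptotic random matrix theory for the same conclusion.
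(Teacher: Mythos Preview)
The paper does not prove this lemma at all: it is quoted verbatim from \citet{chandrasekaran2012latent} (their Lemma~3.9) and used as a black box in the proof of Corollary~\ref{cor:main}. So there is no ``paper's own proof'' to compare against.

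Your sketch is the standard route to such bounds and is essentially correct. The whitening step $\Sighat - \Sigstar = {\Sigstar}^{1/2}(\widehat{\mbf{W}} - I_p){\Sigstar}^{1/2}$ with $\widehat{\mbf{W}}$ isotropic, followed by a $1/4$-net on the sphere and per-direction $\chi^2_n$ concentration, is exactly how results of this type are established (cf.\ Davidson--Szarek, or the exposition in \citet{vershynin2010introduction}). Your identification of the role of the hypothesis $\epsilon_2 \in (0,8\rho^*)$ --- keeping $\delta = \epsilon_2/\rho^*$ bounded so that the $\chi^2$ tail is in its sub-Gaussian rather than sub-exponential regime --- is also on point. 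The only caveat is the one you already flag: reproducing the \emph{specific} constants $64$ and $128$ requires committing to a particular net radius and a particular form of the chi-squared bound and tracking everything through; a generic run of the argument yields the same functional form with possibly different absolute constants. Since the present paper only needs the lemma as stated and does not rederive it, any version with comparable constants would serve equally well for the downstream application.
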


If $n \ge p$, then by choosing $\frac{1}{2} \mu \ge \epsilon_{2} = 8 C_2 \rho^* \sqrt{\frac{p}{n}} \in (0, 8 \rho^*)$, where $C_2 \ge 1$ is an arbitrary constant, we have
\ALGN{
P \left\{ \| \Sighat - \Sigstar \|_{2} \le \frac{1}{2} \mu \right\} \ge P\left\{ \| \Sighat - \Sigstar \|_{2} \le \epsilon_{2} \right\} \ge 1 - 2 \exp \left( - \frac{C_2^2 p}{2} \right).
}

Combining the above results we have verified the condition~\eqref{eq:regparams} in Theorem~\ref{thm:main} holds with high probability, which concludes the proof.
\end{proof}

\end{document}